\documentclass{article}

    \PassOptionsToPackage{numbers, compress}{natbib}
\usepackage[main, final]{neurips_2026}

\usepackage[utf8]{inputenc} % allow utf-8 input
\usepackage[T1]{fontenc}    % use 8-bit T1 fonts
\usepackage{hyperref}       % hyperlinks
\usepackage{url}            % simple URL typesetting
\usepackage{booktabs}       % professional-quality tables
\usepackage{amsfonts}       % blackboard math symbols
\usepackage{nicefrac}       % compact symbols for 1/2, etc.
\usepackage{microtype}      % microtypography
\usepackage{xcolor}         % colors

\definecolor{best}{RGB}{244,181,180}    % 最优（红粉）
\definecolor{second}{RGB}{249,219,184}  % 次优（浅橙）
\definecolor{third}{RGB}{253,244,197}   % 次次优（浅黄）

\usepackage{amsmath}
\usepackage{wrapfig}
\usepackage{multirow}
\usepackage[dvipsnames, table]{xcolor}
\usepackage{makecell}
\usepackage{subcaption}
\usepackage{adjustbox}
\usepackage{pifont}
\usepackage{subcaption} 

\usepackage[algo2e,ruled,vlined]{algorithm2e}
\usepackage[most]{tcolorbox}
\usepackage{graphicx}

\usepackage{amsthm}
\newtheorem{definition}{Definition}
\newtheorem{lemma}{Lemma}
\newtheorem{theorem}{Theorem}
\title{SemMSA: Latent Semantic-Aided Robust Multimodal Sentiment Analysis with Incomplete Data}

\author{
Wenhao Li\textsuperscript{1,2},
Zhibin Wu\textsuperscript{1},
Chong Xiao\textsuperscript{1},
Qiangchang Wang\textsuperscript{1}\thanks{Corresponding author.}\! \\
\textsuperscript{1}Software School, Shandong University \quad
\textsuperscript{2}Shenzhen Loop Area Institute\\
}

\begin{document}

\maketitle

\begin{abstract}
Recent research on Multimodal Sentiment Analysis (MSA) has focused on learning from language, visual, and acoustic modalities with incomplete data to infer human sentiment. Most studies typically compensate for missing information by reconstructing modality features or designing complicated fusion mechanisms. However, these methods still suffer from spurious generation and noisy guidance due to the lack of high-level semantic grounding in partially observed multimodal evidence. To address these issues, we propose SemMSA, a latent semantic-aided framework that constructs rich sentiment-relevant semantics with LLMs, fully integrating with all modalities via anchor-free spectral alignment. It mainly consists of Cross-modal Semantic Refinement (CSR) and Cross-modal Spectral Alignment (CSA). Specifically, CSR first adaptively extracts visual and acoustic representations by corresponding adapters to form a unified multimodal prefix with language in the frozen LLM embedding space. It then iteratively produces continuous discriminative semantic states through a token-efficient latent refinement process without decoding explicit text. Next, CSA simultaneously aligns the refined semantics with all modalities by enhancing the dominant spectral component of their kernel Gram matrix. This captures global nonlinear dependencies among all representations without relying on a predefined anchor modality. In addition, an instance-level spectral separation constraint preserves cross-sample discriminability and mitigates representation collapse.  Extensive experiments on SIMS, MOSI, and MOSEI benchmarks demonstrate that SemMSA achieves state-of-the-art performance.
\end{abstract}

\begin{figure}[t]
  \centering
  \includegraphics[width=0.85\linewidth]{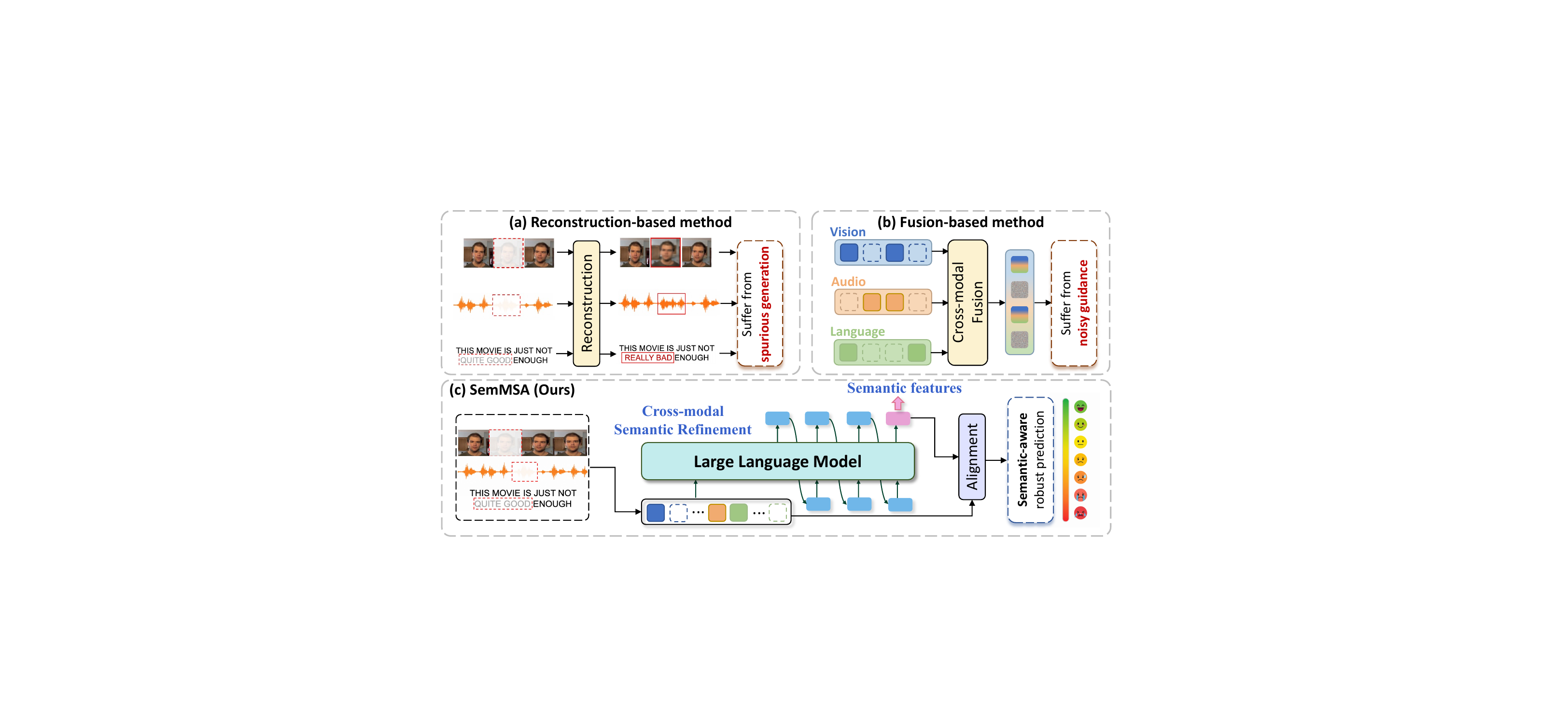} 
  \caption{
Framework comparison between (a) reconstruction-based methods, (b) fusion-based methods, and (c) our SemMSA. Compared with existing methods, SemMSA refines and fully integrates sentiment-relevant semantic information with the LLM to compensate for incomplete data.
}
  \label{fig1}
  \vspace{-0.2cm}
\end{figure}

\section{Introduction}
Multimodal Sentiment Analysis (MSA) aims to learn a comprehensive understanding of human sentiment by modeling multimodal information from language, vision, and audio~\cite{fang2025emoe, wu2025enriching}. Recent research has increasingly moved from controlled laboratory settings toward real-world multimodal scenarios and deployment~\cite{tsai2019multimodal,zhang2023learning,wang2025dlf}. However, multimodal observations are often incomplete due to noise, occlusion, sensor failure, or transmission instability~\cite{li2024unified, zhang2024towards, zhu2025proxy}. In particular, intra-modal missingness, where partial tokens, frames, or acoustic segments are corrupted within each modality, frequently occurs in practice and significantly degrades models trained under complete data assumptions.

Recent studies have made significant progress, following two paradigms, as shown in Figure~\ref{fig1}(a) and (b). Reconstruction-based methods~\cite{yuan2021transformer, sun2023efficient, zeng2022tag, wang2023incomplete} aim to recover complete features from partially observed inputs, alleviating information loss. However, they tend to reconstruct low-level feature patterns rather than high-level semantics that are critical for sentiment understanding. Moreover, the same text may correspond to multiple plausible vocal prosodies and facial expressions, resulting in hallucinating features that contradict the actual sentiment evidence. Fusion-based methods~\cite{zhang2024towards, zhu2025proxy, li2024unified,li2024correlation,kim2024missing} instead directly integrate available modalities through complex cross-modal interaction modules to learn robust representations. However, the fused features still suffer from noisy observations and limited evidence. They may overfit modality co-occurrence patterns observed during training under severe missing-modality scenarios, leading to unstable guidance and degraded discrimination. 

Human sentiment understanding often relies on reliable contextual cues and prior knowledge about emotional expression to interpret intentions, attitudes, and affective states under incomplete observations~\cite{camacho2023large, ortega2025integration}.  This motivates semantic-level compensation, where the goal is to enrich incomplete multimodal representations with rich sentiment-relevant semantic information.  Recent studies also indicate a broader shift from discriminative emotion recognition toward generative emotion understanding~\cite{lian2026mer2026discriminativeemotion}. Large Language Models (LLMs), with strong contextual representation and semantic abstraction capabilities, provide a promising source of such high-level semantics. However, directly decoding textual descriptions or rationales introduces additional costly overhead. Therefore, an efficient mechanism is needed to exploit LLM-derived semantics in the hidden space.

Inspired by this perspective, we propose SemMSA, a latent semantic-aided framework with LLMs, as shown in Figure~\ref{fig1}(c). First, Cross-modal Semantic Refinement (CSR) bridges heterogeneous non-language modalities and the frozen LLM embedding space through lightweight adapters. Each adapter uses learnable prompts with self- and cross-attention to adaptively aggregate reliable evidence from incomplete visual or acoustic sequences. The produced compact visual and acoustic prefix tokens are concatenated with language embedded in the frozen LLM space. CSR then recurrently appends the last hidden state of the LLM as a continuous latent token, yielding a sequence of sentiment-relevant semantic states. This provides token-efficient semantic compensation without explicit text decoding. Next, Cross-modal Spectral Alignment (CSA) establishes a kernel Gram matrix over the normalized semantic, language, visual, and acoustic representations for comprehensive cross-modal consistency alignment. It encourages all features to align simultaneously along a shared leading direction by enhancing the dominant spectral component of this matrix. This captures global nonlinear dependencies across representations, avoiding the instability from anchor modality under severe missingness. In addition, an instance-wise spectral separation constraint is imposed on the dominant eigenvectors to preserve cross-sample discriminability, mitigating representation collapse. 

Overall, our contributions are summarized as follows:
\begin{itemize}
    \item A SemMSA framework is proposed to provide high-level semantic compensation with LLMs and jointly align heterogeneous representations via an anchor-free spectral objective.
    \item CSR module adaptively aggregates multimodal data into the frozen LLM space and iteratively refines sentiment-relevant semantic states in a token-efficient latent process.
    \item CSA module aligns all representations simultaneously by enhancing the dominant spectral component they form, capturing global nonlinear relationships without anchor dependency.
    \item The proposed method consistently achieves state-of-the-art performance on MOSI, MOSEI, and SIMS under diverse missingness, significantly improving accuracy by 1.4\% on average.
 
\end{itemize}
\section{Related Work}
\subsection{MSA with Incomplete Data}
Most MSA methods often assume that language, visual, and acoustic modalities are fully observed~\cite{zadeh2017tensor, tsai2019multimodal, mai2019locally, lv2021progressive, yang2022disentangled, zhang2023learning, zhangimproving}. They learn unified representations by modeling intra- and inter-modal contextual dependencies. Once modalities are missing or corrupted, models trained under complete settings often degrade sharply. Recent fusion-based methods attempt to alleviate missing information through complex interaction and fusion, including multi-view correlation learning~\cite{andrew2013deep, wang2015deep}, recurrent cross-modal translation~\cite{pham2019found, zhao2021missing}, graph-based dependency modeling~\cite{lian2023gcnet}, and modality-conditioned fusion~\cite{liu2024modality}. Several dominance-guided approaches, such as LNLN~\cite{zhang2024towards} and P-RMF~\cite{zhu2025proxy}, further rely on high-quality primary modalities to anchor multimodal learning. Nevertheless, these methods may become unreliable under severe missingness~\cite{wang2023incomplete}. The scarce representative features with noise in limited data may fail to capture sufficient discriminative information. Another line of work reconstructs missing modalities from observed ones, including cascaded residual auto-encoders~\cite{tran2017missing}, cycle-consistency-based recovery~\cite{pham2019found, zhao2021missing}, graph-based reconstruction~\cite{lian2023gcnet}, low-level feature restoration~\cite{sun2023efficient}, and diffusion-based distribution recovery~\cite{wang2023incomplete}. However, reconstruction-based methods mainly recover superficial statistical patterns rather than sentiment cues essential for sentiment judgment, while diffusion models also incur substantial computational costs. In contrast, SemMSA complements incomplete data with compact LLM-derived latent semantics and comprehensively integrates all representations through a consistent spectral alignment. To the best of our knowledge, this is the first attempt to utilize LLMs to perform semantic-level compensation for MSA.

\subsection{Multimodal Alignment}
Most existing methods~\cite{li2022blip, zhu2023prompt, zhai2023sigmoid} adopt CLIP-based pairwise contrastive learning~\cite{radford2021learning}. This paradigm has inspired a series of works extending to additional modalities, such as audio-to-text~\cite{elizalde2023clap}, point cloud-to-text~\cite{zhang2022pointclip}, and video-to-text~\cite{ma2022x}. However, aligning each representation only to a single anchor neglects the interactions among the remaining points, making it difficult to capture global structural relationships. This inherently ignores the complexity and richness of multiple modalities. The Gram matrix, which characterizes the mutual geometry among sets of vectors, has shown promise in theoretical analyses of deep learning networks~\cite{pennington2017nonlinear} and various downstream tasks~\cite{nejjar2023dare, cicchettigramian, liu2026principled}. In contrast, SemMSA introduces a kernelized Gram matrix to capture nonlinear relationships, aligning all representations along a dominant spectral direction without anchor dependency.

\section{Method}

We begin by introducing the preprocessing of MSA, and then present two proposed components: (1) how cross-modal latent semantics are generated through CSR, and (2) how all representations are simultaneously aligned via CSA. Figure~\ref{fig2} shows the overview of the proposed SemMSA.

\begin{figure*}[t]
\centering
\includegraphics[width=1\linewidth]{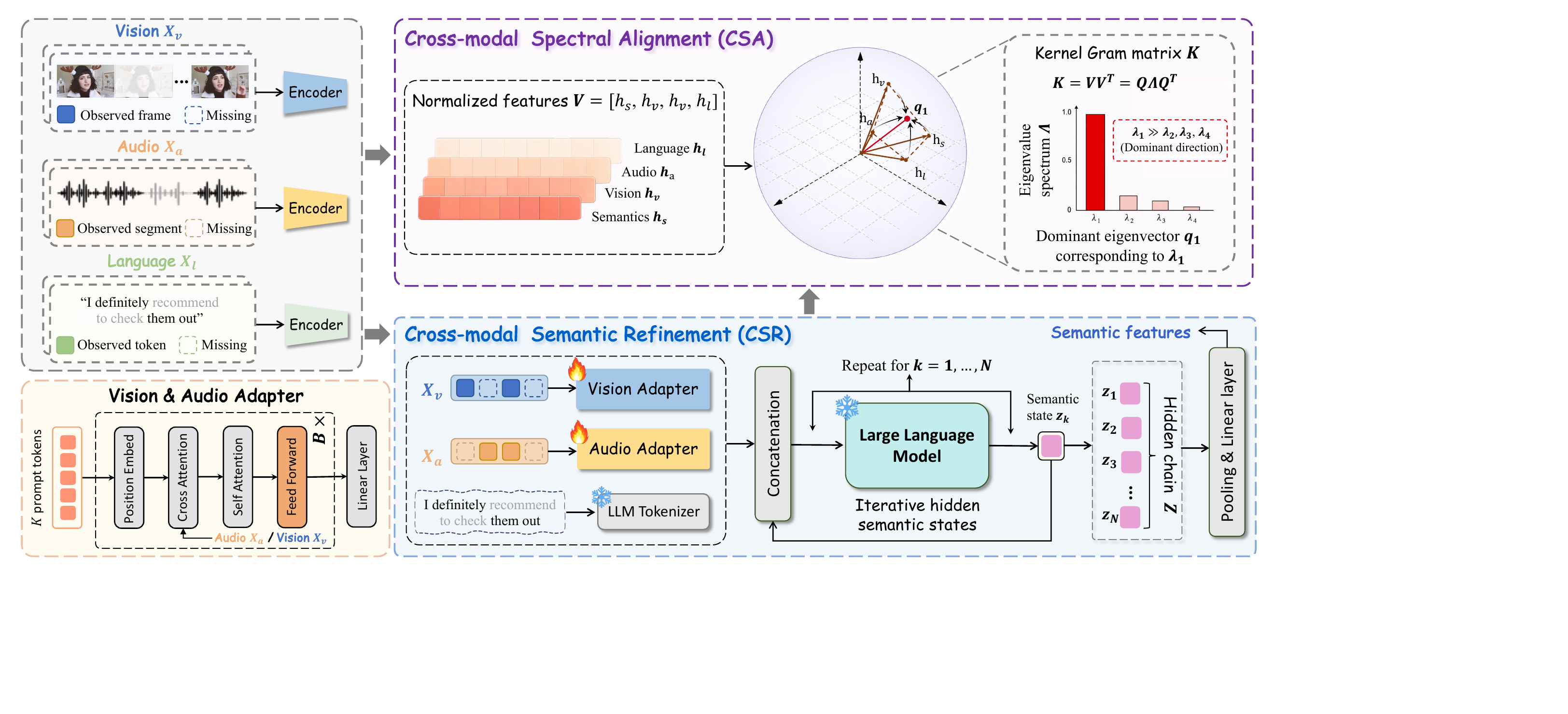}
\caption{Overview of the proposed SemMSA framework. Multimodal inputs with incomplete data are first extracted by encoders. Then, CSR maps visual and audio features via adapters and concatenates with language embedded in the frozen LLM space. Iterative hidden-state refinement is performed to produce compact sentiment-aware latent semantic states. Next, CSA jointly aligns the semantics with visual, acoustic, and language representations by constructing a kernel Gram matrix over normalized features and enhancing its dominant spectral component. }  
	\label{fig2}
 \vspace{-0.3cm}
\end{figure*}

\subsection{Multimodal Input}
Given a multimodal sample for MSA, we consider three modalities of vision, audio, and language. Each modality with incomplete data is denoted by $\mathbf{U}_{m}$, where $m \in \{v,a,l\}$. Following prior works~\cite{zhang2024towards, zhu2025proxy, yuan2021transformer}, each modality is first processed by a widely used modality-specific encoder $f_\phi$ with frozen parameters~\cite{devlin2019bert, mcfee2015librosa, baltruvsaitis2016openface} to obtain a sequence representation $\mathbf{X}_m$ as follows:
\begin{equation}
\mathbf{X}_m = f_\phi(\mathbf{U}_{m}), \quad \mathbf{X}_m \in \mathbb{R}^{T_m \times d_m},
\label{eq1}
\end{equation}
where $T_m$ is the sequence length and $d_m$ denotes the feature dimension of each embedding vector.

\subsection{Cross-modal Semantic Refinement}
\label{csr}

\paragraph{Visual and Audio Adapter.} To extract heterogeneous nonverbal features into the embedding space of a frozen LLM, visual and acoustic adapters with $M$ learnable prompt embeddings $\mathbf{P}_{m} \in \mathbb{R}^{M \times d_m}$ are introduced and updated by a lightweight transformer with $B$ blocks. Due to the features $\mathbf{X}_m$ of frozen encoders without temporal information, learnable positional embeddings $\mathbf{t}_{m} \in \mathbb{R}^{T_m \times d_m}$ are injected into the frame or audio segment representations to obtain  temporal order. $\mathbf{P}_{m}$ first act as queries to perform cross-attention with $\mathbf{X}_m$ to extract modality evidence. These embeddings then interact with each other to capture intra-modality dependencies through self-attention. 

After $B$ blocks, the output of the adapter consists of $M$ modality-specific vectors, one per prompt embedding. They are linearly projected into the dimension of the LLM to obtain $\mathbf{Z}_{v}$, $\mathbf{Z}_{a} \in \mathbb{R}^{M \times d}$. The adapters are optimized to effectively bridge the output of the frozen encoders to the frozen LLM during training. In addition, the language $\mathbf{Z}_{l} \in \mathbb{R}^{T \times d}$ is directly embedded by the LLM, where $T$ is the length of the textual embedding sequence and $d$ is the LLM embedding dimension.

\paragraph{Semantic Refinement.}
The multimodal input $\mathbf{U}^{(0)} = \left[\mathbf{Z}_{l}; \mathbf{Z}_{v}; \mathbf{Z}_{a} \right] \in \mathbb{R}^{(T + 2M) \times d}$ to the frozen LLM is formed by concatenating the three modalities in a fixed order. $\mathbf{U}^{(0)}$ serves as the initial multimodal prefix, providing the frozen LLM with partially observed language, visual, and acoustic evidence in a unified embedding space. Instead of decoding explicit textual descriptions, CSR performs iterative hidden-state refinement directly in the continuous LLM space. Let $\mathcal{F}_{\theta}(\mathbf{U}) \in \mathbb{R}^{|\mathbf{U}| \times d}$ denote the last-layer hidden states produced by the frozen LLM with parameters $\theta$ for an input sequence $\mathbf{U}$. The hidden state at the final position is defined as:
\begin{equation}
\mathbf{f}(\mathbf{U}) = \mathcal{F}_{\theta}(\mathbf{U})_{|\mathbf{U}|} \in \mathbb{R}^{d},
\label{eq2}
\end{equation}
where $\mathbf{f}(\mathbf{U})$ provides a contextual summary of the current multimodal prefix. Based on this representation, the $k$-th latent semantic state is constructed as
\begin{equation}
\mathbf{z}_{k} = \mathbf{f}\!\left(\mathbf{U}^{(k-1)}\right), 
\quad
\mathbf{U}^{(k)} = \left[\mathbf{U}^{(k-1)}; \mathbf{z}_{k}\right],
\quad k=1,\dots,O,
\label{eq3}
\end{equation}
where $O$ is the number of refinement steps, $\mathbf{z}_{k} \in \mathbb{R}^{d}$ denotes the continuous latent semantic state at step $k$, and $\mathbf{U}^{(k)} \in \mathbb{R}^{(T+2M+k)\times d}$ is the updated prefix after appending $\mathbf{z}_k$. Since $\mathbf{z}_{k}$ has the same dimensionality as the LLM token embeddings, it can be appended as a continuous input token for the next refinement step. We use this recurrent operation as a hidden-space semantic refinement mechanism, rather than explicit textual generation. The resulting sequence of latent semantic states is
\begin{equation}
\mathbf{Z} = \left[\mathbf{z}_{1}; \mathbf{z}_{2}; \cdots; \mathbf{z}_{O}\right] \in \mathbb{R}^{O \times d}.
\label{eq4}
\end{equation}
These states provide a compact representation of sentiment-relevant information conditioned on the available multimodal evidence. By repeatedly applying the frozen LLM transformation to the updated prefix, CSR refines the hidden context while avoiding autoregressive decoding of natural-language descriptions. Finally, a compact sentiment-aware semantic representation $\mathbf{H}_{\mathrm{s}}$ is obtained for final fusion and prediction by applying pooling followed by a linear projection with $\mathbf{W}_{\mathrm{s}}$ and $\mathbf{b}_{\mathrm{s}}$:
\begin{equation}
\mathbf{H}_{\mathrm{s}}
=
\mathbf{W}_{\mathrm{s}}\,\mathrm{Pool}(\mathbf{Z})
+
\mathbf{b}_{\mathrm{s}},
\qquad
\mathbf{H}_{\mathrm{s}}\in\mathbb{R}^{d'} .
\label{eq5}
\end{equation}

\subsection{Cross-modal Spectral Alignment}

To achieve comprehensive consistency alignment among semantics, vision, audio, and language, Cross-modal Spectral Alignment (CSA) is proposed to improve global and nonlinear interactions among all representations. Unlike traditional pairwise contrastive objectives which align modality pairs through a predefined anchor, CSA measures the alignment by analyzing the spectral structure of their shared representation matrix. A stronger dominant spectral component indicates that different modalities are concentrated along a common latent direction.

Specifically, three modality representations $\mathbf{X}_m$ are first extracted and unified by their respective modality encoders to obtain $\mathbf{H}_m$ with the ${d'}$ dimension. Each encoder consists of a linear transformation layer followed by Transformer encoder layers. Next, they are transformed into normalized quadruples and construct the matrix $\mathbf{V} = (\mathbf{h}_s, \mathbf{h}_v, \mathbf{h}_a, \mathbf{h}_l)$. The corresponding Gram matrix $\mathbf{G}\in\mathbb{R}^{4\times4}$ is first defined to reflect the pairwise similarity as follows:
\begin{equation}
\mathbf{G}(\mathbf{V})
=
% \mathbf{H}^{\top}\mathbf{H}
\mathbf{V}^\top \mathbf{V}
=
\begin{bmatrix}
\langle\mathbf{h}_s,\mathbf{h}_s\rangle & \langle\mathbf{h}_s,\mathbf{h}_v\rangle & \langle\mathbf{h}_s,\mathbf{h}_a\rangle & \langle\mathbf{h}_s,\mathbf{h}_l\rangle\\
\langle\mathbf{h}_v,\mathbf{h}_s\rangle & \langle\mathbf{h}_v,\mathbf{h}_v\rangle & \langle\mathbf{h}_v,\mathbf{h}_a\rangle & \langle\mathbf{h}_v,\mathbf{h}_l\rangle\\
\langle\mathbf{h}_a,\mathbf{h}_s\rangle & \langle\mathbf{h}_a,\mathbf{h}_v\rangle & \langle\mathbf{h}_a,\mathbf{h}_a\rangle & \langle\mathbf{h}_a,\mathbf{h}_l\rangle\\
\langle\mathbf{h}_l,\mathbf{h}_s\rangle & \langle\mathbf{h}_l,\mathbf{h}_v\rangle & \langle\mathbf{h}_l,\mathbf{h}_a\rangle & \langle\mathbf{h}_l,\mathbf{h}_l\rangle
\end{bmatrix},
\quad
\mathbf{G}_{ij}=\langle\mathbf{h}_i,\mathbf{h}_j\rangle .
\label{eq6}
\end{equation}

To further learn nonlinear cross-modal dependency, this metric is extended into a high-dimensional Reproducing Kernel Hilbert Space (RKHS) using a Radial Basis Function (RBF) kernel mapping $\kappa(\cdot,\cdot)$ The corresponding kernel Gram matrix $\mathbf{K}(V)$ is defined as:
\begin{equation}
\mathbf{K}_{ij}=\kappa(\mathbf{h}_i,\mathbf{h}_j),
\qquad
\kappa(\mathbf{h}_i,\mathbf{h}_j)
=
\exp\left(
-\frac{\|\mathbf{h}_i-\mathbf{h}_j\|_2^2}{2\sigma^2}
\right).
\end{equation}

If the four representations of the same instance are sufficiently aligned in the kernel space, then the kernel Gram matrix should exhibit a significant low-rank structure. In the ideal case, all representations are mapped to the same shared semantic direction, and $\mathbf{K}$ degenerates into a rank-one matrix. Therefore, we perform eigendecomposition on $\mathbf{K}$ as follows:
\begin{equation}
\mathbf{K}=\mathbf{Q}\mathbf{\Lambda}\mathbf{Q}^{\top},
\qquad
\mathbf{\Lambda}=\mathrm{diag}(\lambda_{1},\lambda_{2},\lambda_{3},\lambda_{4}),
\label{eq8}
\end{equation}
where $\lambda_{1}\geq\lambda_{2}\geq\lambda_{3}\geq\lambda_{4}\geq0$. The largest eigenvalue $\lambda_{1}$ represents the principal nonlinear semantic direction shared by the four representations. A higher proportion of it indicates that the representations are more concentrated in the same latent alignment subspace. Therefore, we regard the eigenvalues as logits and enhance the dominance of $\lambda_{1}$ through a softmax-based spectral objective:
\begin{equation}
\mathcal{L}_{\mathrm{csa}}
=
-\frac{1}{N}
\sum_{i=1}^{N}
\log
\frac{\exp(\lambda_{1}^{i}/\tau)}
{\sum_{j=1}^{4}\exp(\lambda_{j}^{i}/\tau)},
\label{eq9}
\end{equation}
where $N$ is the batch size and $\tau$ denotes the temperature parameter.  To prevent degenerate solutions where different instances collapse to the same semantic point, we introduce an instance-wise spectral separation regularizer. 
For the $i$-th instance, the dominant eigenvector $\mathbf{q}_{1}^{i}$ associated with the largest eigenvalue $\lambda_{1}^{i}$ of the kernel Gram matrix indicates the principal alignment pattern among the semantic, visual, acoustic, and language representations. 
Since $\mathbf{q}_{1}^{i}$ only reflects representation-wise combination coefficients, we project it back to the high-dimensional representation space to obtain the dominant semantic direction $\mathbf{u}_{1}^{i}$. 
The separation loss is then defined as:
\begin{equation}
\mathbf{u}_{1}^{i}
=
\frac{\mathbf{V}^{i}\mathbf{q}_{1}^{i}}
{\left\|\mathbf{V}^{i}\mathbf{q}_{1}^{i}\right\|_{2}},
\qquad
\mathcal{L}_{\mathrm{sep}}
=
\frac{1}{N(N-1)}
\sum_{i=1}^{N}
\sum_{\substack{j=1 \\ j\neq i}}^{N}
\left[
(\mathbf{u}_{1}^{i})^{\top}\mathbf{u}_{1}^{j}
\right]^{2}.
\label{eq10}
\end{equation}
where $\mathbf{V}^{i}$ contains the normalized representations of the $i$-th instance.
This regularizer penalizes high similarity between dominant semantic directions of different instances, preserving inter-instance discriminability. These aligned embeddings are fused via element-wise summation to form the final multimodal representation $\hat{\mathbf{H}}$ as follows:
\begin{equation}
\hat{\mathbf{H}} = \mathbf{H}_s + \mathbf{H}_v + \mathbf{H}_a + \mathbf{H}_l.
\label{eq11}
\end{equation}

The fused feature $\hat{\mathbf{H}}$ is finally fed into a linear classifier for sentiment prediction $\hat{y} = \mathrm{Linear}(\hat{\mathbf{H}})$. Finally, the overall loss also includes Mean Squared Error (MSE) loss between the predicted score $\hat{y}$ and the ground-truth sentiment label $y$, as follows:
\begin{equation}
\mathcal{L}_{\text{total}} = \frac{1}{N}\sum_{i=1}^{N}\left\|y^{i}-\hat{y}^{i}\right\|_{2}^{2} +
\mathcal{L}_{\mathrm{csa}}
+
\mathcal{L}_{\mathrm{sep}}.
\label{eq12}
\end{equation}

\begin{table*}[t]
\caption{The average results on MOSI and MOSEI datasets over missing rates from 0.0 to 0.9.}
\label{sota1}
\begin{center}
\begin{small}
\resizebox{\textwidth}{!}{
\begin{tabular}{lcccccc cccccc}
\toprule
& \multicolumn{6}{c}{MOSI} & \multicolumn{6}{c}{MOSEI} \\
\cmidrule(lr){2-7} \cmidrule(lr){8-13}
Method & Acc-2 & F1 & Acc-5 & Acc-7 & MAE $\downarrow$ & Corr $\uparrow$
       & Acc-2 & F1 & Acc-5 & Acc-7 & MAE $\downarrow$ & Corr $\uparrow$ \\
\midrule
MISA~\cite{hazarika2020misa}
& 70.33/71.49 & 70.00/71.28 & 33.08 & 29.85 & 1.085 & 0.524
& 75.82/71.27 & 68.73/63.85 & 39.39 & 40.84 & 0.780 & 0.503 \\

Self-MM~\cite{yu2021learning}
& 69.26/70.51 & 67.54/66.60 & 34.67 & 29.55 & 1.070 & 0.512
& 77.42/73.89 & 72.31/68.92 & 45.38 & 44.70 & 0.695 & 0.498 \\

MMIM~\cite{han2021improving}
& 67.06/69.14 & 64.04/66.65 & 33.77 & 31.30 & 1.077 & 0.507
& 75.89/73.32 & 70.32/68.72 & 41.74 & 40.75 & 0.739 & 0.489 \\

TETFN~\cite{wang2023tetfn}
& 67.68/69.76 & 63.29/65.69 & 34.34 & 30.30 & 1.087 & 0.507
& - & - & - & - & - & - \\
% & 67.68/69.76 & 63.29/65.69 & \cellcolor{second}47.70 & 30.30 & 1.087 & 0.508 \\

TFR-Net~\cite{yuan2021transformer}
& 66.35/68.15 & 60.06/61.73 & 34.67 & 29.54 & 1.200 & 0.459
& 77.23/73.62 & 71.99/68.80 & 34.67 & \cellcolor{second}46.83 & 0.697 & 0.489 \\

ALMT~\cite{zhang2023learning}
& 68.39/70.40 & 71.80/72.57 & 33.42 & 30.30 & 1.083 & 0.498
& 77.54/76.64 & 78.03/77.14 & 41.64 & 40.92 & 0.674 & 0.481 \\

LNLN~\cite{zhang2024towards}
& 70.94/72.55 & 71.25/72.73 & \cellcolor{third}38.27 & \cellcolor{second}34.26 & 1.046 & \cellcolor{third}0.527
& 78.19/76.30 & \cellcolor{third}79.95/77.77 & \cellcolor{third}46.17 & 45.42 & 0.692 & 0.530 \\

P-RMF~\cite{zhu2025proxy}
& \cellcolor{third}71.53/72.81 & \cellcolor{third}71.69/72.93 & \cellcolor{second}38.50 & \cellcolor{third}34.19 & \cellcolor{third}1.038 & 0.525
& \cellcolor{second}78.83/78.14 & \cellcolor{second}80.39/79.33 & 45.87 & 44.63 & \cellcolor{second}0.658 & \cellcolor{second}0.589 \\

TF-Mamba~\cite{li2025tf}
& \cellcolor{second}73.46/72.54 & \cellcolor{second}73.59/72.57 & 37.74 & 33.95 & \cellcolor{second}1.035 & \cellcolor{second}0.548
& \cellcolor{third}77.34/77.61 & 77.18/77.43 & \cellcolor{second}46.64 & \cellcolor{third}45.66 & \cellcolor{third}0.673 & \cellcolor{third}0.578 \\

\hline
SemMSA
& \cellcolor{best}\textbf{74.36/73.91} & \cellcolor{best}\textbf{74.18/73.82} & \cellcolor{best}\textbf{40.93} & \cellcolor{best}\textbf{36.49} & \cellcolor{best}\textbf{1.011} & \cellcolor{best}\textbf{0.550}
& \cellcolor{best}\textbf{79.61/79.38} & \cellcolor{best}\textbf{80.62/79.87} & \cellcolor{best}\textbf{48.12} & \cellcolor{best}\textbf{47.06} & \cellcolor{best}\textbf{0.648} & \cellcolor{best}\textbf{0.601} \\
\hline
\end{tabular}
}
\end{small}
\end{center}
\end{table*}

\section{Experiments}
\subsection{Experimental Details}
\paragraph{Datasets.} Extensive experiments are conducted on three standard MSA benchmarks: MOSI~\cite{zadeh2016mosi} and MOSEI~\cite{zadeh2018multimodal} annotated with sentiment scores in $[-3, +3]$. SIMS~\cite{yu2020ch} is a Chinese dataset annotated in $[-1, +1]$. The detailed dataset statistics are introduced in the Appendix.

\paragraph{Implementation Details.}
The proposed SemMSA is trained for 200 epochs with a batch size of 64 across all datasets. The AdamW optimizer~\cite{loshchilovdecoupled} is used with a learning rate of $1\times10^{-4}$, together with warm-up, cosine annealing, and early stopping strategies. The input sequence length $T$ is set to 8, and the hidden dimension $d$ is set to 128. Qwen3-1.7B~\cite{yang2025qwen3} is adopted as the frozen LLM to generate semantic features. The visual and acoustic Adapters use $M=8$ learnable prompts and $B=2$ blocks. Refinement step $O$ in CSR is set to 4. The RBF kernel bandwidth $\sigma=1.0$ and temperature parameter $\tau=0.1$ by default. All experiments are performed with an NVIDIA RTX 6000 Ada.

\paragraph{Missingness Settings and Evaluation Metrics.} Following~\cite{zhang2024towards, zhu2025proxy,yuan2021transformer, li2025tf}, missing visual and acoustic segments are replaced with zero vectors and language tokens with \texttt{[UNK]}. Training employs instance-wise Bernoulli masking, with 50\% of samples kept complete. During testing, the missing rate is varied from $0$ to $0.9$ (step $0.1$) and missing positions are independently sampled per modality. We report the average results for random seeds 1111, 1112, and 1113. For MOSI and MOSEI, we report seven-class accuracy (Acc-7), Acc-5, Acc-2, F1 score, Mean Absolute Error (MAE), and Pearson correlation coefficient (Corr). Acc-2 and F1 are reported sequentially in two forms of negative/non-negative and negative/positive. For SIMS, we report Acc-5, Acc-3, Acc-2, F1 score, MAE, and Corr. 

\begin{wraptable}{r}{0.5\textwidth}
\vspace{-0.53cm} 
\caption{The average results on SIMS dataset over missing rates from 0.0 to 0.9.}
\label{sota2}
\begin{center}
\begin{small}
\resizebox{\linewidth}{!}{
\begin{tabular}{lcccccc}
\toprule
Method & Acc-2 & F1 & Acc-3 & Acc-5 & MAE $\downarrow$ & Corr $\uparrow$ \\
\midrule
MISA~\cite{hazarika2020misa}
& 72.71 & 66.30 & 56.87 & 31.53 & 0.539 & 0.348 \\

Self-MM~\cite{yu2021learning}
& 72.81 & 68.43 & 56.75 & 32.28 & 0.508 & 0.376 \\

MMIM~\cite{han2021improving}
& 69.86 & 66.21 & 56.76 & 31.81 & 0.544 & 0.339 \\

TETFN~\cite{wang2023tetfn}
& 73.58 & 68.67 & \cellcolor{third}56.91 & 33.42 & \cellcolor{third}0.505 & \cellcolor{third}0.387 \\

TFR-Net~\cite{yuan2021transformer}
& 68.13 & 58.70 & 52.89 & 26.52 & 0.661 & 0.169 \\

ALMT~\cite{zhang2023learning}
& 71.85 & \cellcolor{second}76.21 & 56.47 & 34.16 & 0.509 & 0.372 \\

LNLN~\cite{zhang2024towards}
& 73.15 & 69.18 & \cellcolor{second}57.32 & \cellcolor{third}34.68 & 0.515 & \cellcolor{third}0.387 \\

P-RMF~\cite{zhu2025proxy}
& \cellcolor{third}73.64 & \cellcolor{third}74.65 & 54.75 & \cellcolor{second}34.83 & \cellcolor{second}0.500 & \cellcolor{second}0.414 \\

TF-Mamba~\cite{li2025tf}
& \cellcolor{second}74.68 & 72.20 & 55.51 & 34.46 & 0.512 & 0.386 \\

\hline
\textbf{SemMSA}
& \cellcolor{best}\textbf{75.46}
& \cellcolor{best}\textbf{77.50}
& \cellcolor{best}\textbf{58.84}
& \cellcolor{best}\textbf{35.68}
& \cellcolor{best}\textbf{0.474}
& \cellcolor{best}\textbf{0.501} \\
\hline
\end{tabular}
}
\end{small}
\end{center}
\vspace{-0.5cm}
\end{wraptable}

\subsection{Main Results}

\begin{figure*}[t]
  \centering
  \includegraphics[width=\linewidth]{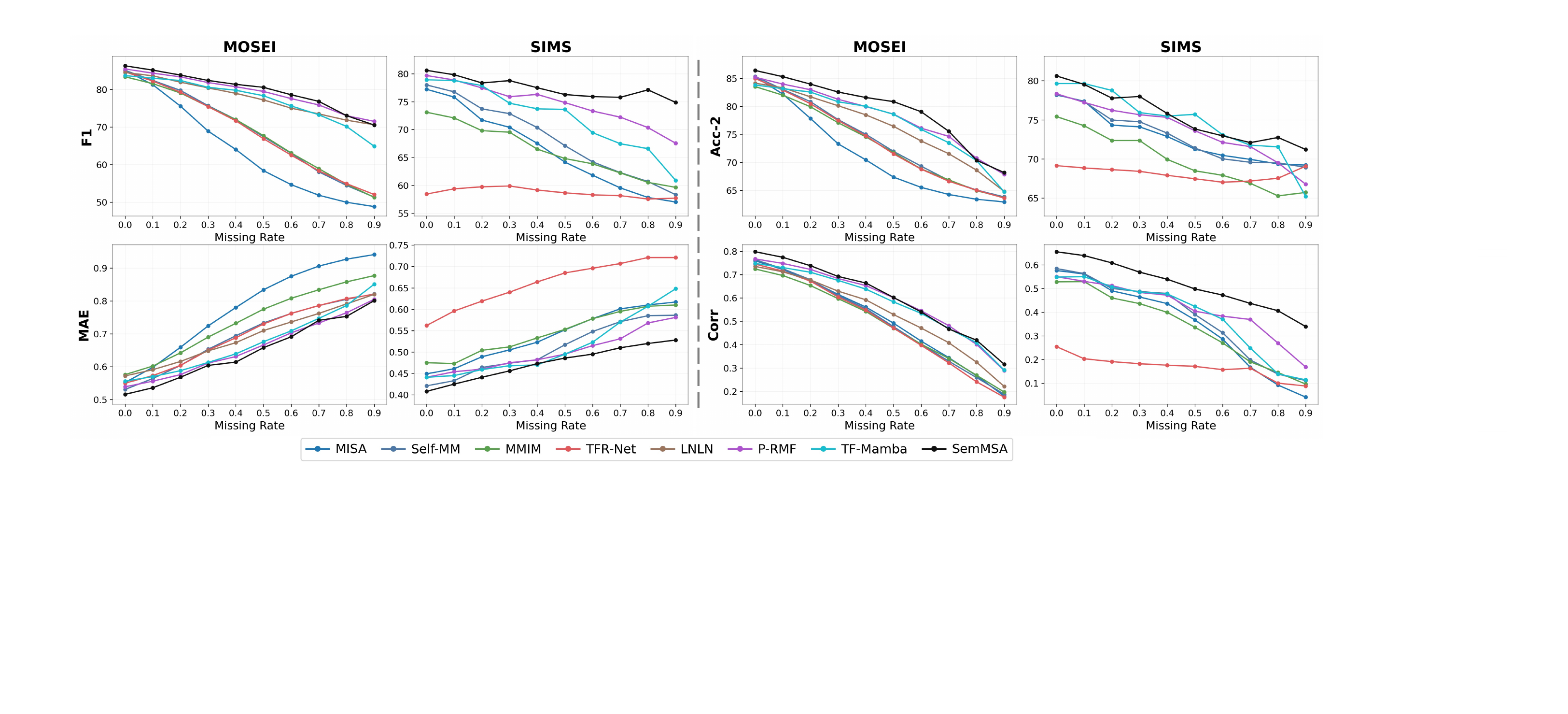} 
  \caption{
Performance visualization of F1 Score, MAE, Acc-2, and Corr across MOSI, MOSEI, and SIMS under missing rates from 0.0 to 0.9, where lower MAE denotes superior performance.}
  \label{fig3}
\end{figure*}

\paragraph{Intra-modal Missingness.} 
Table~\ref{sota1} and~\ref{sota2} report comparisons on MOSI, MOSEI, and SIMS under intra-modal missing settings. SemMSA consistently achieves state-of-the-art performance. On MOSI, it improves Acc-5 and Acc-7 over the best baselines TF-Mamba~\cite{li2025tf} by 2.43\% and 2.23\%, respectively, while achieving the lowest MAE of 1.011. On MOSEI, SemMSA further improves Acc-2 to 79.61 and F1 to 80.62, demonstrating its scalability on larger-scale data. On SIMS, the improvement is particularly clear, surpassing P-RMF~\cite{zhu2025proxy} by 0.087 in Corr. Figure~\ref{fig3} further illustrates the performance trends of F1, Acc-2, MAE, and Corr.  As the missing rate increases, most baseline methods exhibit clear performance degradation, struggling to maintain reliable representations.  In contrast, SemMSA maintains consistently superior and more stable performance across missing rates. These results indicate that SemMSA can effectively complement incomplete multimodal observations with sentiment-relevant latent semantics and comprehensively align heterogeneous representations through global nonlinear alignment, leading to strong robustness and generalization under challenging missing-modality scenarios. Detailed results under each missing rate are provided in the Appendix.

\begin{wraptable}{r}{0.5\textwidth}
\vspace{-0.4cm} 
\caption{The F1 scores on the MOSEI dataset under varying inter-modal missing conditions.}
\label{sota3}
\begin{center}
\begin{small}
\setlength{\tabcolsep}{3pt}
\renewcommand{\arraystretch}{1.05}
\resizebox{\linewidth}{!}{
\begin{tabular}{c cccccccc}
\toprule
Method 
& $\{l\}$ & $\{a\}$ & $\{v\}$ & $\{l,a\}$ & $\{l,v\}$ & $\{a,v\}$ & $\{l,a,v\}$ & Avg. \\
\midrule

Self-MM~\cite{yu2021learning}
& 71.53 & 43.57 & 37.61 & 75.91 & 74.62 & 49.52 & \cellcolor{third}83.69 & 62.35 \\

CubeMLP~\cite{sun2022cubemlp}
& 67.52 & 39.54 & 32.58 & 71.69 & 70.06 & 48.54 & 83.17 & 59.01 \\

DMD~\cite{li2023decoupled}
& 70.26 & 46.18 & 39.84 & 74.78 & 72.45 & 52.70 & \cellcolor{second}84.78 & 63.00 \\

MCTN~\cite{pham2019found}
& 75.50 & 62.72 & 59.46 & 76.64 & 77.13 & 64.84 & 81.75 & 71.15 \\

TransM~\cite{wang2020transmodality}
& \cellcolor{third}77.98 & 63.68 & 58.67 & \cellcolor{third}80.46 & 78.61 & 62.24 & 81.48 & 71.87 \\

SMIL~\cite{ma2021smil}
& 76.57 & \cellcolor{third}65.96 & 60.57 & 77.68 & 76.24 & 66.87 & 80.74 & 72.09 \\

GCNet~\cite{lian2023gcnet}
& \cellcolor{second}80.52 & \cellcolor{second}66.54 & \cellcolor{second}61.83 & \cellcolor{second}81.96 & \cellcolor{second}81.15 & \cellcolor{third}69.21 & 82.35 & \cellcolor{third}74.79 \\

CorrKD~\cite{li2024correlation}
& 80.76 & 66.09 & \cellcolor{third}62.30 & 81.74 & \cellcolor{third}81.28 & \cellcolor{second}71.92 & 82.16 & \cellcolor{second}75.18 \\

\hline
\textbf{SemMSA} 
& \cellcolor{best}\textbf{83.61} 
& \cellcolor{best}\textbf{66.59} 
& \cellcolor{best}\textbf{65.91} 
& \cellcolor{best}\textbf{85.17} 
& \cellcolor{best}\textbf{84.96} 
& \cellcolor{best}\textbf{72.68} 
& \cellcolor{best}\textbf{86.32} 
& \cellcolor{best}\textbf{77.89} \\
\hline

\end{tabular}
}
\end{small}
\end{center}
\vspace{-0.4cm}
\end{wraptable}
\paragraph{Inter-modal Missingness.}

Following the same protocol in~\cite{li2024correlation, lian2023gcnet, li2023decoupled}, we further remove entire modalities from MOSEI samples to evaluate the performance of SemMSA under inter-modal missingness. The results are reported in Table~\ref{sota3} using F1 as the evaluation metric. It is worth noting that (1) all methods degrade significantly in unimodal scenarios, confirming the complementary roles of different modalities. (2) Language consistently outperforms visual and acoustic modalities, highlighting more explicit sentiment semantics in textual information. (3) SemMSA achieves the best results under all inter-modal missingness, outperforming the second-best CorrKD~\cite{li2024correlation} by a significant margin of 2.71\% on average.  Improvements in the language missingness, such as \{a,v\}, demonstrate that SemMSA effectively leverages nonverbal evidence to obtain rich sentiment-relevant semantics by semantic refinement, fully integrating them via anchor-free  spectral alignment.

\begin{table*}[htbp]
\caption{Ablation study of components and losses on MOSI and SIMS datasets. }
\label{ablation}
\begin{center}
\begin{small}
\setlength{\tabcolsep}{3pt}
\renewcommand{\arraystretch}{1.05}
\resizebox{\textwidth}{!}{
\begin{tabular}{ccc cccccc cccccc}
\toprule

\multirow{2}{*}{\textbf{CSR}} 
& \multirow{2}{*}{$\boldsymbol{\mathcal{L}_{\mathrm{csa}}}$} 
& \multirow{2}{*}{$\boldsymbol{\mathcal{L}_{\mathrm{sep}}}$} 
& \multicolumn{6}{c}{MOSI} 
& \multicolumn{6}{c}{SIMS} \\

\cmidrule(lr){4-9} \cmidrule(lr){10-15}

&&
& Acc-2 & F1 & Acc-5 & Acc-7 & MAE $\downarrow$ & Corr $\uparrow$
& Acc-2 & F1 & Acc-3 & Acc-5 & MAE $\downarrow$ & Corr $\uparrow$ \\

\midrule

% none
& & 
& 67.49/69.47 & 63.28/67.01 & 34.29 & 31.71 & 1.085 & 0.501
& 66.83 & 57.92 & 31.12 & 54.61 & 0.593 & 0.387 \\

% CSR
\checkmark & & 
& 73.36/73.29 & 73.04/73.28 & 38.84 & 35.03 & 1.040 & 0.522
& 74.18 & 76.21 & 34.27 & 57.36 & 0.488 & 0.450 \\

% CSR + Lcsa
\checkmark & \checkmark & 
& 73.94/73.56 & 73.83/73.46 & 40.48 & 36.08 & 1.024 & 0.529
& 74.92 & 76.88 & 35.02 & 58.12 & 0.480 & 0.493 \\

% full
\checkmark & \checkmark & \checkmark
& \textbf{74.36/73.91} & \textbf{74.18/73.82} & \textbf{40.93} & \textbf{36.49} & \textbf{1.011} & \textbf{0.550}
& \textbf{75.46} & \textbf{77.50} & \textbf{35.68} & \textbf{58.84} & \textbf{0.474} & \textbf{0.501} \\

\bottomrule
\end{tabular}
}
\end{small}
\end{center}
\end{table*}

\subsection{Model Analysis}

\paragraph{Ablation Study.} We conduct an ablation study on MOSI and SIMS to evaluate the effectiveness of the proposed CSR, $\mathcal{L}_{csa}$, and $\mathcal{L}_{sep}$, as shown in Table~\ref{ablation}. First, introducing CSR alone brings substantial gains over the baseline on both datasets, improving Acc-2 and F1 by 5.68\% and 11.11\% on average. These indicate that rich sentiment-relevant semantics effectively compensate for incomplete inputs. Second, adding $\mathcal{L}_{\mathrm{csa}}$ further brings performance gains. MOSI Acc-5 increases from 38.84 to 40.48 and Acc-7 from 35.03 to 36.08, suggesting better cross-modal consistency. Finally, the best performance is achieved when $\mathcal{L}_{sep}$ is also incorporated. The full model consistently outperforms all ablated variants, highlighting the importance of semantic refinement for enhancing discriminative sentiment semantics, as well as the complementarity between intra-instance cross-modal coherence by $\mathcal{L}{\mathrm{csa}}$ and inter-instance discriminability by $\mathcal{L}{\mathrm{sep}}$.

\begin{table}[t]
\centering
\caption{Comparison with different projection and alignment methods on SIMS dataset.}
\label{ab_rl}

\begin{small}
\setlength{\tabcolsep}{2.5pt}
\renewcommand{\arraystretch}{1.05}

\begin{subtable}{0.49\textwidth}
\centering
\caption{Projection mechanisms}
\resizebox{\textwidth}{!}{
\begin{tabular}{lcccccc}
\toprule
Method & Acc-2 & F1 & Acc-5 & Acc-3 & MAE $\downarrow$ & Corr $\uparrow$ \\
\midrule
Linear       & 72.84 & 73.62 & 54.91 & 32.46 & 0.512 & 0.431 \\
MLP          & 73.58 & 74.36 & 56.03 & 33.42 & 0.501 & 0.452 \\
Trans.~\cite{vaswani2017attention} & 74.52 & 76.21 & 57.63 & 34.76 & 0.486 & 0.482 \\
QFormer~\cite{li2023blip} & 74.83 & 76.68 & 58.12 & 35.09 & 0.481 & 0.491 \\
Ours         & \textbf{75.46} & \textbf{77.50} & \textbf{58.84} & \textbf{35.68} & \textbf{0.474} & \textbf{0.501} \\
\bottomrule
\end{tabular}
}
\label{ab1}
\end{subtable}
\hfill
\begin{subtable}{0.49\textwidth}
\centering
\caption{Alignment mechanisms}
\resizebox{\textwidth}{!}{
\begin{tabular}{lcccccc}
\toprule
Method & Acc-2 & F1 & Acc-5 & Acc-3 & MAE $\downarrow$ & Corr $\uparrow$ \\
\midrule
InfoNCE~\cite{oord2018representation}
& 73.42 & 75.39 & 56.88 & 33.74 & 0.501 & 0.489 \\
CMD~\cite{hazarika2020misa}
& 73.77 & 75.18 & 56.92 & 34.21 & 0.503 & 0.487 \\
PMRL~\cite{liu2026principled}
& 74.63 & 75.98 & 57.62 & 34.58 & 0.493 & 0.499 \\
Volume~\cite{cicchettigramian}
& 74.31 & 76.35 & 57.91 & 34.86 & 0.487 & 0.506 \\
Ours
& \textbf{75.46} & \textbf{77.50} & \textbf{58.84} & \textbf{35.68} & \textbf{0.474} & \textbf{0.501} \\
\bottomrule
\end{tabular}
}
\label{ab2}
\end{subtable}

\end{small}
\end{table}

\paragraph{Comparison with Projection Methods.}
As shown in Table~\ref{ab1}, we compare different projection strategies by replacing our Adapter with representative alternatives. Directly projecting modality features into the LLM embedding space with linear or MLP leads to a clear performance drop, indicating that dimensional matching alone is insufficient for constructing effective multimodal prefixes. Despite the effectiveness of Standard Transformer ($\sim$85M)~\cite{vaswani2017attention} and Q-Former ($\sim$100M)~\cite{li2023blip}, they introduce substantially higher computational overhead. In contrast, our lightweight Adapter with only 4.7M parameters achieves the best overall results, indicating that it can compactly aggregate sequence-level modal evidence and adaptively select reliable clues under incomplete observations.

\paragraph{Comparison with Alignment Methods.} As shown in Table ~\ref{ab2}, we compare four cross-modal alignment strategies. The baseline is utilized without an alignment objective. InfoNCE~\cite{oord2018representation} and CMD~\cite{hazarika2020misa} rely on pairwise similarity, failing to model the joint structure across all three modalities, resulting in poor results. While the PMRL~\cite{liu2026principled} and volume-based loss improve upon this, they remain limited to the linear space and anchor modality, struggling to capture complex semantic relationships and select a stable anchor under severe intra-modality missingness. In contrast, our method introduces a Reproducing Kernel Hilbert Space (RKHS) without anchor dependency to enable nonlinear and robust alignment, achieving the best results.

\begin{table}[htbp]
\centering
\caption{Efficiency comparison on MOSI.}
\label{table6}
\begin{small}
\setlength{\tabcolsep}{4pt}
\renewcommand{\arraystretch}{1.1}
\begin{tabular}{lcccccc}
\toprule
Method & Trainable Params & Task GFLOPs & Memory(GB) & Latency(ms) & Acc-2(\%) & F1(\%) \\
\midrule
LNLN~\cite{zhang2024towards} & 116M & 9.0 & 12.8 & 25.1 & 70.94/72.55 & 71.25/72.73 \\
P-RMF~\cite{zhu2025proxy} & 117M & 9.7 & 13.5 & 67.0 & 71.53/72.81 & 71.69/72.93  \\
SemMSA & 113M & 4.8 & 13.0 & 30.6 & 74.36/73.91 & 74.18/73.82 \\
\bottomrule
\end{tabular}
\end{small}
\end{table}

\paragraph{Comparison of Computational Overhead.}
Efficiency experiments are further conducted on MOSI using the same server configuration as LNLN~\cite{zhang2024towards} and P-RMF~\cite{zhu2025proxy}, where trainable parameters and GFLOPs include only the modules optimized for MSA. As shown in Table~\ref{table6}, SemMSA achieves the best performance while maintaining a comparable computation cost. It reduces the task GFLOPs by 50.5\% compared with LNLN and inference time by 54.3\% compared with P-RMF. Notably, the reported memory and latency reflect the entire pipeline with the frozen LLM, rather than just the trainable modules. The efficiency mainly comes from the token-efficient hidden-state refinement design. SemMSA avoids autoregressive decoding of explicit textual descriptions and appends only a small number of continuous latent states. These results indicate that SemMSA provides a favorable trade-off between robustness and inference cost under incomplete multimodal settings.

% To investigate the efficiency of SemMSA, the experiment is conducted on MOSI under the same server configuration, compared to representative LNLN~\cite{zhang2024towards} and P-RMF~\cite{zhu2025proxy}. As shown in Table~\ref{table6}, SemMSA achieves the best performance while maintaining a comparable and lower computational overhead. SemMSA requires only 4.8 GFLOPs, reducing the computational cost by about 50.5\% compared to LNLN. Related to P-RMF, SemMSA achieves the lowest latency of 25.1 ms, reducing inference time by 57.8\%. These results indicate that SemMSA provides a favorable trade-off between effectiveness and efficiency. These gains mainly come from the frozen design of LLMs and token-efficient latent reasoning, which avoids expensive explicit decoding.

\begin{figure*}[t]
  \centering
  \includegraphics[width=\linewidth]{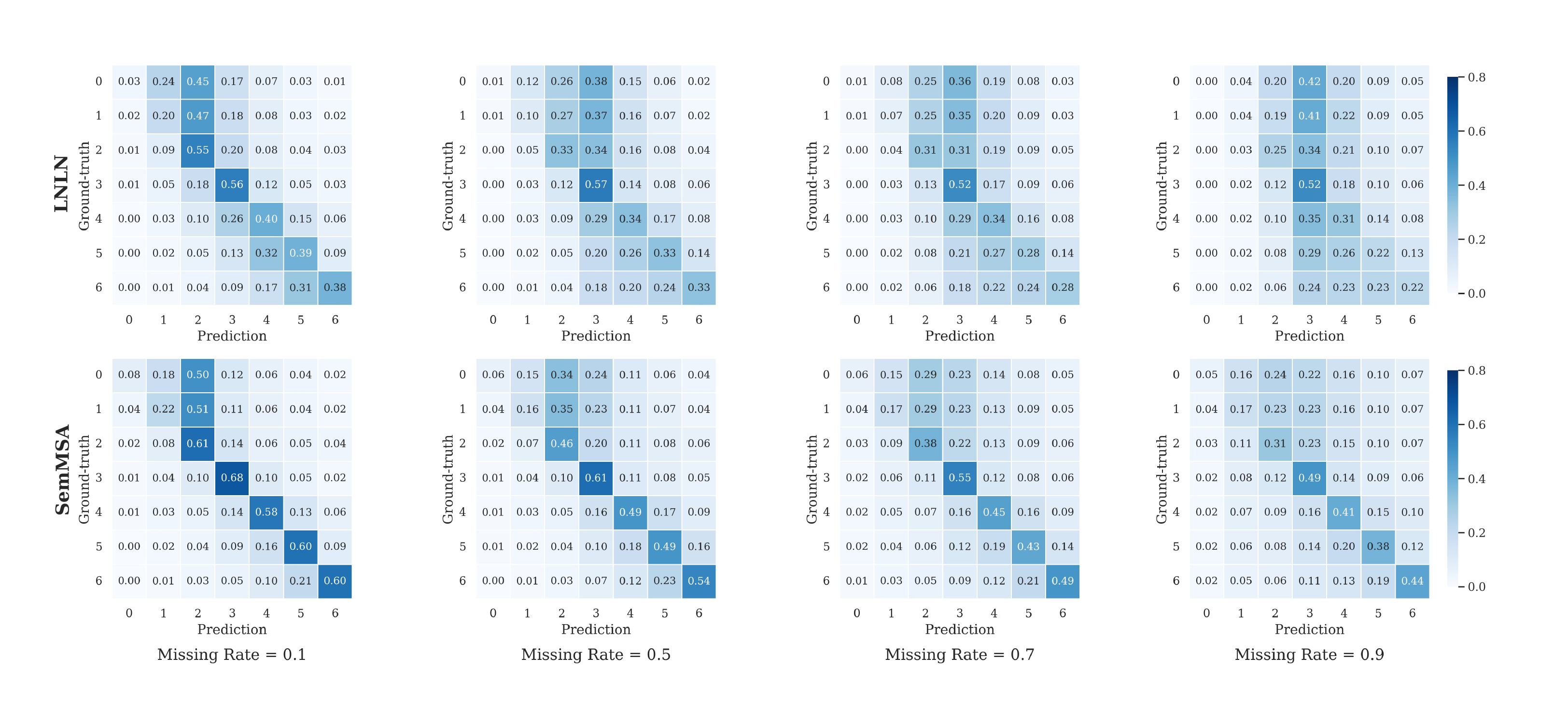} 
  \caption{ Confusion matrices of SemMSA and LNLN on the MOSI, where 0-6 denote strongly negative, negative, weakly negative, neutral, weakly positive, positive, and strongly positive, respectively.}
  \label{fig4}
  \vspace{-0.15cm}
\end{figure*}

\paragraph{Visualization of Prediction Performance.} To further evaluate the robustness of SemMSA, we visualize the confusion matrices on the MOSI dataset under missing rates of 0.1, 0.5, 0.7, and 0.9, as shown in Figure~\ref{fig4}.  Under high missing rates, LNLN tends to wrongly concentrate predictions around middle sentiment classes. In contrast, SemMSA maintains clearer diagonal patterns across sentiment categories. This demonstrates that the semantic-aware modeling in SemMSA facilitates the capture of discriminative sentiment cues from incomplete inputs, alleviating classification boundary degeneration for robust prediction under severe modality missingness. 

\vspace{-0.2cm}

\begin{table}[htbp]
\centering
\caption{Comparison with LLMs for prediction and semantic generation on MOSI and SIMS.}
\label{table7}
\begin{small}
\setlength{\tabcolsep}{4pt}
\renewcommand{\arraystretch}{1.05}
\resizebox{\columnwidth}{!}{
\begin{tabular}{llcccccc cccccc}
\toprule
\multirow{2}{*}{Model}
& \multirow{2}{*}{Role}
& \multicolumn{6}{c}{MOSI}
& \multicolumn{6}{c}{SIMS} \\
\cmidrule(lr){3-8} \cmidrule(lr){9-14}
& & Acc-2 & F1 & Acc-5 & Acc-7 & MAE $\downarrow$ & Corr $\uparrow$
& Acc-2 & F1 & Acc-5 & Acc-3 & MAE $\downarrow$ & Corr $\uparrow$ \\
\midrule

Qwen2.5-Omni-7B~\cite{xu2025qwen25omnitechnicalreport}
& Prediction
& 69.84/71.36 & 69.92/71.58 & 36.94 & 33.21 & 1.086 & 0.497
& 70.83 & 70.26 & 52.74 & 31.92 & 0.548 & 0.341 \\
\midrule

Qwen3-1.7B~\cite{yang2025qwen3}
& Semantics
& 74.36/73.91 & 74.18/\textbf{73.82} & 40.93 & 36.49 & 1.011 & 0.550
& 75.46 & 77.50 & 58.84 & 35.68 & 0.474 & 0.501 \\

Llama3.1-8B~\cite{grattafiori2024llama}
& Semantics
& 74.42/73.75 & 74.04/73.69 & \textbf{41.11} & 36.50 & 1.015 & 0.546
& - & - & - & - & - & - \\

Qwen3-8B~\cite{yang2025qwen3}
& Semantics
& \textbf{74.66}/73.88 & 74.11/73.79 & 40.86 & \textbf{36.62} & 1.012 & \textbf{0.553}
& 75.38 & \textbf{77.63} & 58.79 & \textbf{35.77} & 0.476 & \textbf{0.514} \\

Qwen2.5-Omni-7B~\cite{xu2025qwen25omnitechnicalreport}
& Semantics
& 74.52/\textbf{74.03} & \textbf{74.26}/73.76 & 41.05 & 36.55 & \textbf{1.008} & 0.548
& \textbf{75.59} & 77.42 & \textbf{58.96} & 35.61 & \textbf{0.471} & 0.512 \\

\bottomrule
\end{tabular}
}
\end{small}
\vspace{-0.2cm}
\end{table}

\paragraph{The Effect of Different LLMs.} Table~\ref{table7} compares open-source Qwen3-1.7B, Llama-3.1-8B, Qwen3-8B, and Qwen2.5-Omni-7B on MOSI and SIMS. For fair comparison, the direct prediction of LLMs strictly follows the same evaluation protocol and data splits for MSA with incomplete data. However, its performance is relatively limited with only 69.84/71.36 Acc-2 and 69.92/71.58 F1 on MOSI, showing the difficulty of adapting LLMs to sentiment prediction.  In contrast, using LLMs for semantic generation brings clear improvements, since rich semantic information provides auxiliary high-level knowledge to compensate for scarce representations with missing data. Specifically, Qwen3-8B achieves the best F1 and Acc-3 on SIMS. Llama3.1-8B without support of Chinese and Qwen2.5-Omni-7B obtains superior results of Acc-5 and MAE on MOSI, respectively. These results show that SemMSA is robust across different LLMs and that LLMs are more effective as semantic generators than direct predictors for MSA with incomplete data.
\section{Conclusion}
In this paper, we propose SemMSA, a semantic-aided framework to enrich incomplete data with LLM-derived sentiment-aware semantic representations and integrate all modalities through anchor-free spectral alignment. Specifically, we introduce Cross-modal Semantic Refinement (CSR), which maps visual and acoustic representations into the frozen LLM embedding space with lightweight Adapters. CSR then produces sentiment-relevant latent semantics iteratively through token-efficient hidden-state refinement, without decoding explicit text. We further propose Cross-modal Spectral Alignment (CSA), which jointly aligns semantic, language, visual, and acoustic representations by enhancing the dominant spectral component of their kernel Gram matrix. This captures global nonlinear cross-modal dependencies without relying on a predefined anchor modality, while preserving discriminative structure through instance-level spectral separation. Extensive experiments on SIMS, MOSI, and MOSEI datasets demonstrate the effectiveness of SemMSA, achieving state-of-the-art performance under diverse missing-modality settings.

{
\small
\bibliographystyle{IEEEtran}
\bibliography{reference}
}

\newpage
\appendix
\section{Appendix / supplemental material}

\subsection{Training Algorithm}

\begin{algorithm2e}
	\caption{Training algorithm of the proposed SemMSA.}
	\label{alg}
	\KwIn{Training set $\mathcal{D}_{train}$, frozen modality encoders $f_{\phi}^{m}$, frozen LLM $F_{\theta}$, visual/audio Adapters $\mathcal{A}_{v},\mathcal{A}_{a}$, modality encoders $\mathcal{E}_{m}$, semantic projector $\{W_s,b_s\}$, classifier $\mathcal{C}$}
	\While{not converged}
	{
		$1.$ Sample a batch $\mathcal{B}=\{(U_v^i,U_a^i,U_l^i,y^i)\}_{i=1}^{N}$ from $\mathcal{D}_{train}$ and apply instance-wise missing-modality masking; \\

		$2.$ Extract modality features $X_m^i$ according to Eq.~\ref{eq1}; \\

		$3.$ Feed $X_v^i$ and $X_a^i$ into visual/audio Adapters to obtain compact prefix tokens $Z_v^i$ and $Z_a^i$, and embed language input as $Z_l^i$; \\

		$4.$ Construct the initial multimodal prefix $U^{i,(0)}=[Z_l^i;Z_v^i;Z_a^i]$ in the frozen LLM embedding space; \\

		\For{$k=1$ \KwTo $O$}
		{
			$5.$ Obtain the latent semantic state $z_k^i$ from the final-position hidden state of $F_{\theta}$, and update $U^{i,(k)}$ according to Eq.~\ref{eq2} and Eq.~\ref{eq3}; \\
		}

		$6.$ Collect $Z^i$ and obtain the fused semantic representation according to Eq.~\ref{eq4} and Eq.~\ref{eq5}; \\

		$7.$ Extract visual, acoustic, and language features to obtain $H_v^i$, $H_a^i$, and $H_l^i$; \\

		$8.$ Normalize $H_s^i,H_v^i,H_a^i,H_l^i$ and construct $V^i=[h_s^i,h_v^i,h_a^i,h_l^i]$; \\

		$9.$ Build the kernel Gram matrix $K^i$ and perform eigendecomposition according to Eq.~\ref{eq6}--Eq.~\ref{eq8}; \\

		$10.$ Compute the cross-modal spectral alignment loss $\mathcal{L}_{\text{csa}}$ by enhancing the dominance of $\lambda_1^i$ according to Eq.~\ref{eq9}; \\

		$11.$ Compute the spectral separation loss $\mathcal{L}_{\text{sep}}$ from dominant semantic directions $\{u_1^i\}_{i=1}^{N}$ according to Eq.~\ref{eq10}; \\

		$12.$ Fuse all representations as $\hat{H}^i$ and predict sentiment score $\hat{y}^i$ according to Eq.~\ref{eq11}; \\

		$13.$ Calculate $\mathcal{L}_{\text{total}}$ with the sentiment regression loss, $\mathcal{L}_{\text{csa}}$, and $\mathcal{L}_{\text{sep}}$ according to Eq.~\ref{eq12}; \\

		$14.$ Update all trainable parameters via gradient backpropagation. \\
	}
	\KwOut{Trained SemMSA model for MSA with incomplete data.}
\end{algorithm2e}

We describe the training procedure of SemMSA in Algorithm~\ref{alg}. 
Given incomplete multimodal inputs, SemMSA first extracts modality-specific features and maps visual and acoustic evidence into the frozen LLM embedding space through lightweight Adapters. 
Then, Cross-modal Semantic Refinement (CSR) iteratively appends hidden semantic states from the frozen LLM to produce compact sentiment-aware semantics without explicit text decoding. 
After that, Cross-modal Spectral Alignment (CSA) constructs a kernel Gram matrix over semantic, visual, acoustic, and language representations, and enhances its dominant spectral component for anchor-free multimodal alignment. 
An instance-level spectral separation loss is further introduced to preserve discriminability across samples. 
Finally, all representations are fused, and the model is optimized for sentiment prediction.

\subsection{Theoretical Analysis}
\label{app:csa_theory}

In this section, we provide a theoretical analysis of the proposed Cross-modal Spectral Alignment (CSA).
Different from pairwise contrastive alignment, CSA aligns semantic, visual, acoustic, and language representations simultaneously by analyzing the spectral structure of their kernel Gram matrix.
The analysis shows that enhancing the dominant spectral component of the kernel Gram matrix encourages all modalities to concentrate around a shared nonlinear semantic direction in the RKHS, thereby achieving anchor-free cross-modal alignment.

\subsubsection{Preliminaries}

For the $i$-th multimodal instance, SemMSA obtains four normalized representations:
\begin{equation}
    V^i = [h_s^i, h_v^i, h_a^i, h_l^i],
\end{equation}
where $h_s^i$, $h_v^i$, $h_a^i$, and $h_l^i$ denote the semantic, visual, acoustic, and language representations, respectively.
For simplicity, we omit the instance index $i$ and denote the $k$ modality representations as $\{h_1,h_2,\ldots,h_k\}$
CSA maps these representations into an RKHS $\mathcal{F}$ through a nonlinear feature map $\psi(\cdot)$ induced by a positive definite kernel $\kappa(\cdot,\cdot)$:
\begin{equation}
    \kappa(h_p,h_q)=\langle \psi(h_p),\psi(h_q)\rangle_{\mathcal{F}}.
\end{equation}
In SemMSA, we adopt the RBF kernel:
\begin{equation}
    \kappa(h_p,h_q)=\exp\left(-\frac{\|h_p-h_q\|_2^2}{2\sigma^2}\right).
\end{equation}
The corresponding kernel Gram matrix is defined as
\begin{equation}
    K =
    \begin{bmatrix}
        \kappa(h_1,h_1) & \cdots & \kappa(h_1,h_k) \\
        \vdots & \ddots & \vdots \\
        \kappa(h_k,h_1) & \cdots & \kappa(h_k,h_k)
    \end{bmatrix}
    \in \mathbb{R}^{k\times k}.
\end{equation}
Since the RBF kernel satisfies $\kappa(h_p,h_p)=1$, we have
\begin{equation}
    \mathrm{Tr}(K)=k.
\end{equation}
As $K$ is symmetric positive semi-definite, it admits the eigendecomposition:
\begin{equation}
    K = Q\Lambda Q^\top,\quad
    \Lambda=\mathrm{diag}(\lambda_1,\lambda_2,\ldots,\lambda_k),
\end{equation}
where $\lambda_1\geq\lambda_2\geq\cdots\geq\lambda_k\geq0$ and $\sum_{j=1}^{k}\lambda_j=k$.

\subsubsection{Kernelized Full Alignment and Rank-one Gram Matrix}

\begin{definition}[Kernelized full alignment]
The modality representations $\{h_j\}_{j=1}^{k}$ are fully aligned in the RKHS if their kernel embeddings are identical:
\begin{equation}
    \psi(h_1)=\psi(h_2)=\cdots=\psi(h_k).
\end{equation}
\end{definition}

\begin{lemma}[Kernelized full alignment $\Longleftrightarrow$ rank-one kernel Gram matrix]
\label{lemma:kernel_rank_one}
Let $K\in\mathbb{R}^{k\times k}$ be the kernel Gram matrix constructed from $\{\psi(h_j)\}_{j=1}^{k}$ with $\kappa(h_j,h_j)=1$.
Then the following two statements are equivalent:
\begin{equation}
    \psi(h_1)=\psi(h_2)=\cdots=\psi(h_k),
\end{equation}
and
\begin{equation}
    \mathrm{rank}(K)=1, \quad K=\mathbf{1}\mathbf{1}^{\top}.
\end{equation}
\end{lemma}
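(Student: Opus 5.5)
The argument works entirely with the RKHS inner product and the unit-diagonal condition $\kappa(h_j,h_j)=\|\psi(h_j)\|_{\mathcal{F}}^2=1$. We prove the two directions separately. The key tool is that unit vectors in a Hilbert space have inner product $1$ exactly when they coincide.

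\emph{Forward direction.} Assume $\psi(h_1)=\cdots=\psi(h_k)=:\phi$. Then for every $p,q$,
\begin{equation*}
K_{pq}=\langle \psi(h_p),\psi(h_q)\rangle_{\mathcal{F}}=\langle\phi,\phi\rangle_{\mathcal{F}}=\kappa(h_1,h_1)=1 .
\end{equation*}
Hence $K=\mathbf{1}\mathbf{1}^\top$. This matrix is nonzero and every column equals $\mathbf{1}$, so $\mathrm{rank}(K)=1$. Its spectrum is $\lambda_1=k$ and $\lambda_2=\cdots=\lambda_k=0$, which matches $\mathrm{Tr}(K)=k$.

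\emph{Reverse direction.} Assume $K=\mathbf{1}\mathbf{1}^\top$. Then $\langle\psi(h_p),\psi(h_q)\rangle_{\mathcal{F}}=1$ for all $p,q$. Expanding the norm of the difference gives
\begin{equation*}
\|\psi(h_p)-\psi(h_q)\|_{\mathcal{F}}^2=\kappa(h_p,h_p)+\kappa(h_q,h_q)-2\kappa(h_p,h_q)=1+1-2=0 .
\end{equation*}
So $\psi(h_p)=\psi(h_q)$ for all $p,q$, which is full alignment.

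\emph{Main obstacle: the phrase ``$\mathrm{rank}(K)=1$''.} If the statement is read as requiring only $\mathrm{rank}(K)=1$, we must show this already forces $K=\mathbf{1}\mathbf{1}^\top$.

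Since $K$ is PSD with rank one, it can be written as $K=vv^\top$ for some $v\in\mathbb{R}^k$. The unit diagonal gives $v_j^2=1$, so $v_j=\pm1$ for each $j$. This only yields $K_{pq}=v_pv_q\in\{\pm1\}$, so a sign ambiguity remains.

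For the RBF kernel the ambiguity is removed by positivity: $\kappa(h_p,h_q)=\exp(\cdot)>0$ for all $p,q$. Therefore $v_pv_q=1$ for every pair, the $v_j$ all share one sign, and $K=\mathbf{1}\mathbf{1}^\top$.

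For a general unit-diagonal kernel, the equivalence can fail at exactly this point. The pair $\psi(h_2)=-\psi(h_1)$ gives a rank-one $K$ that is not $\mathbf{1}\mathbf{1}^\top$. The proof should therefore record that the rank-one-only reading uses $K_{pq}>0$, which the RBF kernel supplies. Under the reading $K=\mathbf{1}\mathbf{1}^\top$ with rank one as a consequence, no extra assumption is needed.

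We can optionally note that RBF injectivity then gives $h_p=h_q$ in input space, although the lemma does not require it.
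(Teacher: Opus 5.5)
Your proof is correct and follows the paper's argument essentially step for step. The forward direction uses $K_{pq}=\langle\psi(h_p),\psi(h_p)\rangle_{\mathcal{F}}=1$. The converse factors $K=vv^\top$ with $v_j^2=1$, invokes RBF positivity $K_{pq}>0$ to force a common sign, and concludes via $\|\psi(h_p)-\psi(h_q)\|_{\mathcal{F}}^2=1+1-2=0$.

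Your remark that the rank-one-only reading genuinely needs $K_{pq}>0$ (witness $\psi(h_2)=-\psi(h_1)$ for a general unit-diagonal kernel) is a correct and useful sharpening. The lemma as stated assumes only $\kappa(h_j,h_j)=1$, while the paper's proof silently specializes to the RBF kernel at that step.
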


\begin{proof}
If all modality embeddings are fully aligned in the RKHS, then for any $p,q\in\{1,\ldots,k\}$,
\begin{equation}
    K_{pq}=\langle \psi(h_p),\psi(h_q)\rangle_{\mathcal{F}}
    =\langle \psi(h_p),\psi(h_p)\rangle_{\mathcal{F}}=1,
\end{equation}
where the last equality follows from $\kappa(h_p,h_p)=1$.
Therefore, $K=\mathbf{1}\mathbf{1}^{\top}$, which is a rank-one matrix.

Conversely, suppose $\mathrm{rank}(K)=1$ and $\kappa(h_j,h_j)=1$ for all $j$.
Since $K$ is symmetric positive semi-definite, there exists a vector $r\in\mathbb{R}^{k}$ such that
\begin{equation}
    K=rr^\top.
\end{equation}
The diagonal constraint $K_{jj}=1$ implies $r_j^2=1$ for all $j$.
For the RBF kernel, all entries satisfy $K_{pq}>0$.
Thus $r_p r_q>0$ for any $p,q$, meaning all elements of $r$ share the same sign.
Consequently,
\begin{equation}
    K_{pq}=r_p r_q=1,\quad \forall p,q.
\end{equation}
Hence $K=\mathbf{1}\mathbf{1}^{\top}$.
Since
\begin{equation}
    \|\psi(h_p)-\psi(h_q)\|_{\mathcal{F}}^2
    =K_{pp}+K_{qq}-2K_{pq}
    =1+1-2=0,
\end{equation}
we obtain $\psi(h_p)=\psi(h_q)$ for all modality pairs.
Therefore, all modalities are fully aligned in the RKHS.
\end{proof}

Lemma~\ref{lemma:kernel_rank_one} shows that the goal of nonlinear cross-modal alignment can be transformed into encouraging the kernel Gram matrix to approach a rank-one structure.
This provides the theoretical basis for CSA.

\subsubsection{Dominant Spectral Component Encourages Nonlinear Alignment}

\begin{theorem}[Dominant eigenvalue maximization promotes kernelized cross-modal alignment]
\label{thm:dominant_eigen}
Let $K$ be the kernel Gram matrix of $k$ normalized modality representations in the RKHS, with eigenvalues
$\lambda_1\geq\lambda_2\geq\cdots\geq\lambda_k\geq0$ and $\sum_{j=1}^{k}\lambda_j=k$.
Then $K$ is rank-one if and only if
\begin{equation}
    \lambda_1=k,\quad \lambda_2=\cdots=\lambda_k=0.
\end{equation}
Moreover, increasing the dominance ratio of $\lambda_1$ reduces the residual spectral energy
\begin{equation}
    \sum_{j=2}^{k}\lambda_j = k-\lambda_1,
\end{equation}
thereby pushing $K$ toward its optimal rank-one approximation.
\end{theorem}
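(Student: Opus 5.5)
The plan is to split the statement into its two parts: the spectral characterization of rank-one $K$, and the monotonicity remark about residual energy. Throughout we use only that $K$ is symmetric positive semi-definite with unit diagonal, as set up in the Preliminaries, so that $\mathrm{Tr}(K)=\sum_{j}\lambda_j=k$ and all $\lambda_j\ge 0$.

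For the equivalence, we argue via the spectral theorem. Since $K=Q\Lambda Q^\top$ with $Q$ orthogonal, $\mathrm{rank}(K)$ equals the number of nonzero eigenvalues. If $\mathrm{rank}(K)=1$, exactly one eigenvalue is nonzero. Because the eigenvalues are ordered and nonnegative, this forces $\lambda_1>0$ and $\lambda_2=\cdots=\lambda_k=0$. The trace identity then gives $\lambda_1=k$. Conversely, if $\lambda_1=k$ and the remaining eigenvalues vanish, then $K=k\,q_1q_1^\top$, which has rank one because $k>0$. We may also note, invoking Lemma~\ref{lemma:kernel_rank_one}, that this case is exactly $K=\mathbf{1}\mathbf{1}^\top$ with $q_1=\mathbf{1}/\sqrt{k}$. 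Hence the spectral condition is equivalent to kernelized full alignment.

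For the second part, the identity $\sum_{j\ge2}\lambda_j=k-\lambda_1$ is immediate from the trace constraint. It follows that the residual energy is a strictly decreasing function of $\lambda_1$. It also shows that the dominance ratio $\lambda_1/\sum_j\lambda_j=\lambda_1/k$ is a strictly increasing function of $\lambda_1$. So increasing the ratio is the same as decreasing the residual.

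To make "toward its optimal rank-one approximation" precise, we would invoke the Eckart--Young--Mirsky theorem. The best rank-one approximation of $K$ is $K_1=\lambda_1q_1q_1^\top$. The approximation errors are
\begin{equation}
\|K-K_1\|_F^2=\sum_{j=2}^{k}\lambda_j^2, \qquad \|K-K_1\|_*=\sum_{j=2}^{k}\lambda_j = k-\lambda_1 .
\end{equation}
The nuclear-norm error is therefore exactly the residual energy. The Frobenius error is controlled by it, since
\begin{equation}
\sum_{j\ge2}\lambda_j^2\le\Big(\sum_{j\ge2}\lambda_j\Big)^2=(k-\lambda_1)^2 .
\end{equation}
Both errors therefore vanish as $\lambda_1\to k$.

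The main obstacle is interpretive rather than technical: stating what "pushing toward" means in a way that is actually a theorem. I would handle it by proving explicit bounds rather than a dynamical claim. One bound is the error estimate above. The other is a quantitative link back to alignment: for any $p,q$,
\begin{equation}
\|\psi(h_p)-\psi(h_q)\|^2_{\mathcal F}=2-2K_{pq}.
\end{equation}
Using $\|K-\mathbf{1}\mathbf{1}^\top\|$ and the fact that $\lambda_1\le\mathbf{1}^\top K\mathbf{1}/\ldots$ via the Rayleigh quotient, I would aim to show that every $K_{pq}\to1$ as $\lambda_1\to k$. Concretely, $K-K_1$ has small norm, and the unit-diagonal constraint then forces $\lambda_1(q_1)_p^2\approx1$ for each $p$. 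Entrywise positivity of the RBF kernel, via Perron--Frobenius, keeps $q_1$ entrywise positive, so the off-diagonal entries are close to $1$ as well.
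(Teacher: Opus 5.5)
Your proof of the equivalence and of the identity $\sum_{j\ge2}\lambda_j=k-\lambda_1$ is the same as the paper's: the rank of the PSD matrix $K$ equals its number of nonzero eigenvalues, and $\mathrm{Tr}(K)=k$.

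You differ from the paper on the final clause. The paper only observes that the non-dominant eigenvalues vanish as $\lambda_1\to k$, then cites Lemma~\ref{lemma:kernel_rank_one}, which covers only the exact rank-one case. You instead give ``optimal rank-one approximation'' a precise meaning via Eckart--Young--Mirsky: the nuclear-norm error is exactly $k-\lambda_1$ and the Frobenius error is at most $k-\lambda_1$. You also add a stability estimate that the paper lacks, and it does go through. Let $\delta=k-\lambda_1$ and let $R=K-\lambda_1q_1q_1^\top$ be the PSD remainder. The unit diagonal gives $\lambda_1(q_1)_p^2=1-R_{pp}\ge1-\delta$. Positive semidefiniteness gives $|R_{pq}|\le\sqrt{R_{pp}R_{qq}}\le\delta/2$. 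Entrywise positivity of $q_1$ (Perron--Frobenius) then yields $K_{pq}\ge1-\frac{3}{2}\delta$, i.e.\ $\|\psi(h_p)-\psi(h_q)\|_{\mathcal{F}}^2\le3(k-\lambda_1)$. This is a quantitative version of the alignment conclusion that the paper only asserts.

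Drop the Rayleigh-quotient clause, though. The Rayleigh quotient gives $\lambda_1\ge\mathbf{1}^\top K\mathbf{1}/k$, the opposite of what you wrote, and the diagonal argument does not need it.

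Your explicit reading of the dominance ratio as $\lambda_1/\sum_j\lambda_j$ is what makes the ``moreover'' clause true; the paper makes this identification tacitly. It does not transfer to the softmax ratio in $\mathcal{L}_{\mathrm{csa}}$, which depends on the gaps $\lambda_1-\lambda_j$ rather than on $\lambda_1$ alone. For instance, take $k=4$, $\sigma=1$, $\tau=0.1$. Four unit vectors with all pairwise kernel values $0.3$ give spectrum $(1.9,0.7,0.7,0.7)$. The configuration $h_s=h_v=-h_a=-h_l$ gives spectrum $(2+2e^{-2},\,2-2e^{-2},\,0,\,0)$. The first has the smaller $\lambda_1$ but the larger softmax ratio.
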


\begin{proof}
Since $K$ is positive semi-definite, all eigenvalues are non-negative.
The rank of $K$ equals the number of non-zero eigenvalues.
Thus, $K$ is rank-one if and only if exactly one eigenvalue is non-zero.
Because $\mathrm{Tr}(K)=\sum_{j=1}^{k}\lambda_j=k$, the only possible rank-one spectrum is
\begin{equation}
    \lambda_1=k,\quad \lambda_2=\cdots=\lambda_k=0.
\end{equation}

For the second statement, the residual spectral energy outside the dominant component is
\begin{equation}
    \sum_{j=2}^{k}\lambda_j
    =
    \sum_{j=1}^{k}\lambda_j-\lambda_1
    =
    k-\lambda_1.
\end{equation}
Therefore, increasing $\lambda_1$ directly decreases the total energy of all non-dominant spectral components.
When $\lambda_1$ approaches $k$, the remaining eigenvalues approach zero, and $K$ approaches a rank-one matrix.
According to Lemma~\ref{lemma:kernel_rank_one}, this corresponds to full cross-modal alignment in the RKHS.
\end{proof}

Theorem~\ref{thm:dominant_eigen} justifies the CSA objective in SemMSA.
Specifically, CSA treats the eigenvalues of $K$ as logits and enhances the dominance of the largest eigenvalue:
\begin{equation}
    \mathcal{L}_{\mathrm{csa}}
    =
    -\frac{1}{N}\sum_{i=1}^{N}
    \log
    \frac{\exp(\lambda_1^i/\tau)}
    {\sum_{j=1}^{k}\exp(\lambda_j^i/\tau)}.
\end{equation}
Minimizing $\mathcal{L}_{\mathrm{csa}}$ increases the relative dominance of $\lambda_1^i$ over the remaining eigenvalues.
As a result, the kernel Gram matrix becomes closer to a rank-one matrix, encouraging semantic, visual, acoustic, and language representations to align along a shared nonlinear semantic direction.

\subsubsection{Anchor-free Property}

Pairwise contrastive learning usually selects one modality as the anchor and aligns the remaining modalities to it.
In contrast, CSA does not require a predefined anchor modality.
The dominant eigenvector $q_1$ of $K$ is determined by the joint spectral structure of all modalities:
\begin{equation}
    Kq_1=\lambda_1 q_1.
\end{equation}
Thus, the shared alignment direction is induced by all modality representations rather than by any single modality.
The corresponding dominant semantic direction in the representation space is computed as
\begin{equation}
    u_1=\frac{Vq_1}{\|Vq_1\|_2}.
\end{equation}
This direction adaptively summarizes the principal cross-modal consensus of each instance.
Therefore, CSA performs any-to-any alignment by enhancing a data-dependent dominant spectral component, avoiding the instability caused by manually selecting an anchor modality.

\subsubsection{Instance-level Spectral Separation}

Although maximizing the dominant eigenvalue promotes intra-instance cross-modal alignment, it may lead to a degenerate solution where different instances collapse to a similar semantic direction.
To avoid this issue, SemMSA introduces an instance-level spectral separation loss:
\begin{equation}
    \mathcal{L}_{\mathrm{sep}}
    =
    \frac{1}{N(N-1)}
    \sum_{i=1}^{N}
    \sum_{\substack{j=1 \\ j\neq i}}^{N}
    \left[
    (u_1^i)^\top u_1^j
    \right]^2.
\end{equation}
This loss penalizes high similarity between dominant semantic directions from different instances.
Therefore, CSA simultaneously encourages:
\begin{equation}
    \text{intra-instance alignment: }
    \lambda_1^i \gg \lambda_2^i,\ldots,\lambda_k^i,
\end{equation}
and
\begin{equation}
    \text{inter-instance separation: }
    (u_1^i)^\top u_1^j \rightarrow 0,\quad i\neq j.
\end{equation}
The former ensures that different modalities of the same sample are aligned in the RKHS, while the latter preserves discriminability across samples and mitigates representation collapse.

The above analysis indicates that CSA has three desirable theoretical properties.
First, by constructing a kernel Gram matrix, CSA extends linear Gram-based alignment to nonlinear RKHS alignment, enabling the model to capture complex cross-modal dependencies.
Second, by enhancing the dominant eigenvalue, CSA explicitly encourages the kernel Gram matrix to approach a rank-one structure, which corresponds to full alignment of semantic, visual, acoustic, and language representations in the RKHS.
Third, by using the dominant eigenvector rather than a predefined modality as the alignment direction, CSA realizes anchor-free multimodal alignment.
Together with the instance-level spectral separation loss, CSA learns representations that are both cross-modally coherent and instance-discriminative.

\subsection{Experiments} 

According to the granularity of missing elements, MSA with incomplete data can be categorized into inter-modal and intra-modal missingness. The former removes entire modalities, whereas the latter corrupts partial language tokens, visual frames, or acoustic segments within each modality. Intra-modal missingness often occurs randomly and may co-exist across multiple modalities, causing fragmented emotional cues and cross-modal inconsistency. Consequently, intra-modal missingness is considered more complex, mainly focused in this work.

\begin{table}[t]
    \centering
    \caption{Statistics of the multimodal sentiment analysis datasets.}
    \label{table8}
    \scalebox{1}{
        \begin{tabular}{lcccccc}
            \toprule
            \multirow{2}{*}{\textbf{Dataset}} 
            & \multirow{2}{*}{\textbf{Speaker}} 
            & \multirow{2}{*}{\textbf{Clip}} 
            & \multicolumn{3}{c}{\textbf{Sample}} 
            & \multirow{2}{*}{\textbf{Language}} \\
            \cmidrule(lr){4-6}
            & & 
            & $\text{Train}$ 
            & $\text{Valid}$ 
            & $\text{Test}$ 
            & \\
            \cmidrule(lr){1-1} \cmidrule(lr){2-2} \cmidrule(lr){3-3} \cmidrule(lr){4-6} \cmidrule(lr){7-7}
            MOSI~\cite{zadeh2016mosi}
            & 93 & 2,199 & 1,284 & 229 & 686 & English \\
            
            MOSEI~\cite{zadeh2018multimodal}
            & 1,000 & 22,856 & 16,326 & 1,871 & 4,659 & English \\
            
            SIMS~\cite{yu2020ch}
            & 474 & 2,281 & 1,368 & 456 & 457 & Chinese \\
            \bottomrule
        \end{tabular}
    }
\end{table}

\subsubsection{Dataset Details}
SemMSA is evaluated on three standard benchmarks of \textbf{MOSI}~\cite{zadeh2016mosi}, \textbf{MOSEI}~\cite{zadeh2018multimodal}, and \textbf{SIMS}~\cite{yu2020ch}.  Detailed statistics of all datasets are summarized in Table~\ref{table8}.

\textbf{MOSI.}
CMU-MOSI is a widely used benchmark for multimodal sentiment analysis. 
It contains 2,199 opinion-level video segments collected from online videos, where each sample is associated with three modalities: language, vision, and audio. 
Following the standard split, the dataset is divided into 1,284 training samples, 229 validation samples, and 686 test samples. 
Each sample is annotated with a real-valued sentiment score ranging from $-3$ to $+3$, where $-3$ denotes strongly negative sentiment and $+3$ denotes strongly positive sentiment. 
Due to its relatively small scale and fine-grained sentiment annotations, MOSI is commonly used to evaluate the robustness and generalization ability of MSA models under limited data conditions.

\textbf{MOSEI.}
CMU-MOSEI is a large-scale multimodal sentiment analysis dataset built from YouTube video clips. 
It contains 22,856 annotated video segments covering diverse speakers, topics, and expression styles. 
Each segment includes language, visual, and acoustic information, making it suitable for evaluating multimodal representation learning in realistic scenarios. 
The dataset is split into 16,326 training samples, 1,871 validation samples, and 4,659 test samples. 
Similar to MOSI, each sample is labeled with a continuous sentiment score in the range of $[-3,+3]$, where lower values indicate more negative sentiment and higher values indicate more positive sentiment. 
Compared with MOSI, MOSEI provides a larger and more diverse evaluation setting, which allows a more reliable assessment of model scalability and robustness.

\textbf{SIMS.}
CH-SIMS is a Chinese multimodal sentiment analysis dataset collected from movies and TV series. 
It contains 2,281 video clips with aligned language, visual, and acoustic modalities. 
The dataset is partitioned into 1,368 training samples, 456 validation samples, and 457 test samples. 
Each sample is manually annotated with a sentiment score ranging from $-1$ to $+1$, where negative values indicate negative sentiment and positive values indicate positive sentiment. 
Different from MOSI and MOSEI, SIMS provides fine-grained annotations in Chinese multimodal scenarios, making it useful for evaluating whether a model can generalize across languages, domains, and cultural contexts.

\begin{table}[t]
\centering
\caption{Hyperparameters used on different datasets.}
\label{table9}
\begin{small}
\setlength{\tabcolsep}{4pt}
\renewcommand{\arraystretch}{1.05}
\resizebox{0.7\linewidth}{!}{
\begin{tabular}{lccc}
\toprule
\textbf{Hyperparameter} & \textbf{MOSI} & \textbf{MOSEI} & \textbf{SIMS} \\
\midrule
Vector Length $T$ & 8 & 8 & 8 \\
Dimension $d$ & 128 & 128 & 128 \\
Batch Size & 64 & 64 & 64 \\
Learning Rate & $1\mathrm{e}{-4}$ & $1\mathrm{e}{-4}$ & $1\mathrm{e}{-4}$ \\
Refine Step & 4 & 4 & 4 \\
Optimizer & AdamW & AdamW & AdamW \\
Epochs & 200 & 200 & 200 \\
Warmup & \checkmark & \checkmark & \checkmark \\
Cosine Annealing & \checkmark & \checkmark & \checkmark \\
Early Stop & \checkmark & \checkmark & \checkmark \\
Seed & 1111,1112,1113 & 1111,1112,1113 & 1111,1112,1113 \\
\bottomrule
\end{tabular}
}
\end{small}
\vspace{-0.3cm}
\end{table}

\subsubsection{Baselines and Experimental Setup}
We conduct a fair comparison with existing advanced and state-of-the-art methods, including \textbf{Intra-modal missingness}: MISA~\cite{hazarika2020misa}, Self-MM~\cite{yu2021learning}, MMIM~\cite{han2021improving}, TETFN~\cite{wang2023tetfn}, TFR-Net~\cite{yuan2021transformer}, ALMT~\cite{zhang2023learning}. LNLN~\cite{zhang2024towards}, P-RMF~\cite{zhu2025proxy}, and TF-Mamba~\cite{li2025tf}; as well as \textbf{Inter-modal missingness}: 
CubeMLP~\cite{sun2022cubemlp}, DMD~\cite{li2023decoupled}, MCTN~\cite{pham2019found}, TransM~\cite{wang2020transmodality}, SMIL~\cite{ma2021smil}, GCNet~\cite{lian2023gcnet}, CorrKD~\cite{li2024correlation}. The results of intra-modal missing baselines are reported as in ~\cite{zhang2024towards, zhu2025proxy, li2025tf}. The results of inter-modal missing baselines are adopted as in ~\cite{lian2023gcnet, li2024correlation}. This ensures fairness and consistency in the same evaluation settings across all compared methods. Our SemMSA model is trained using the PyTorch framework on an NVIDIA RTX 6000 Ada with 48GB of memory.  Following prior MSA works~\cite{zhang2024towards, zhu2025proxy, yuan2021transformer}, each modality is first processed by frozen modality-specific encoders $f_\phi$. Language is encoded
by BERT~\cite{devlin2019bert}, audio features are extracted by Librosa~\citep{mcfee2015librosa},
and visual features are obtained by OpenFace~\cite{baltruvsaitis2016openface}. Details of the hyperparameters are shown in Table~\ref{table9}.

\subsubsection{Performance Comparison under Complete-Modality MSA}
Table~\ref{ap1} reports the comparison with state-of-the-art methods on MOSI and MOSEI under the complete modality setting. 
Following the training and evaluation in prior complete MSA works~\cite{zhang-etal-2025-modal, zhangimproving}, SemMSA achieves the best overall performance on both datasets, showing that the proposed framework remains highly effective even when all language, visual, and acoustic modalities are available. 
On MOSI, SemMSA obtains 88.70\% Acc-2, 88.56\% F1, 50.97\% Acc-7, 0.861 Corr, and 0.628 MAE, outperforming the strongest baseline KEBR by 1.43\%, 1.31\%, and 3.16\% in Acc-2, F1, and Acc-7, respectively. 
It also reduces MAE from 0.683 to 0.628, indicating more accurate sentiment intensity regression. 
On MOSEI, SemMSA achieves 88.88\% Acc-2, 87.70\% F1, 55.90\% Acc-7, 0.877 Corr, and 0.513 MAE, consistently surpassing previous methods across all metrics. 
In particular, compared with ConFEDE, SemMSA improves Acc-7 and Corr by 1.04\% and 0.097, respectively.
These results demonstrate that SemMSA is not limited to incomplete-modality scenarios. 
By using Cross-modal Semantic Refinement to introduce compact LLM-derived sentiment semantics and Cross-modal Spectral Alignment to enforce global anchor-free consistency among modalities, SemMSA learns more coherent and discriminative multimodal representations.

\begin{table*}[t]
\caption{Comparison with state-of-the-art methods on MOSI and MOSEI datasets
under the complete modality setting. Acc-2 and F1 are reported under the
negative/positive protocol.}
\label{ap1}
\begin{center}
\begin{small}
\setlength{\tabcolsep}{3pt}
\renewcommand{\arraystretch}{1.05}
\resizebox{\textwidth}{!}{
\begin{tabular}{l ccccc ccccc}
\toprule
\multirow{2}{*}{Method}
& \multicolumn{5}{c}{MOSI}
& \multicolumn{5}{c}{MOSEI} \\
\cmidrule(lr){2-6}\cmidrule(lr){7-11}
& Acc-2  & F1  & Acc-7  & Corr  & MAE$\downarrow$
& Acc-2  & F1  & Acc-7  & Corr  & MAE$\downarrow$ \\
\midrule

MISA~\citep{hazarika2020misa}
& 83.4  & 83.6  & 42.3  & 0.761 & 0.783
& 85.5  & 85.3  & 52.2  & 0.756 & 0.555 \\

Self-MM~\citep{yu2021learning}
& 84.9  & 84.9  & 45.3  & -     & 0.738
& 85.15 & 84.90 & 53.87 & 0.765 & 0.531 \\

HyCon~\citep{9767560}
& 85.2  & 85.1  & 46.6  & 0.790 & \cellcolor{third}0.713
& 85.4  & 85.6  & 52.8  & 0.776 & 0.601 \\

ConFEDE~\citep{yang-etal-2023-confede}
& 85.5  & 85.5  & 42.3  & 0.784 & 0.742
& 85.82 & 85.83 & \cellcolor{third}54.86 & \cellcolor{third}0.780 & \cellcolor{third}0.522 \\

KEBR~\citep{10.1145/3664647.3681163}
& \cellcolor{second}87.27 & \cellcolor{second}87.25 & \cellcolor{third}47.81
& \cellcolor{second}0.819 & \cellcolor{second}0.683
& \cellcolor{second}86.74 & \cellcolor{third}86.68 & 54.37
& \cellcolor{second}0.799  & \cellcolor{second}0.517 \\

GLoMo~\citep{10.1145/3664647.3681527}
& 86.7  & 86.6  & \cellcolor{second}48.3 & 0.782 & 0.718
& 86.5  & 86.4  & \cellcolor{second}55.0  & 0.771 & 0.539 \\  % best → second

MFON~\citep{zhang-etal-2025-modal}
& \cellcolor{third}86.9 & \cellcolor{third}86.9 & 44.9
& \cellcolor{third}0.797 & 0.725
& 86.32 & 86.29 & 53.72 & \cellcolor{third}0.780 & 0.528 \\

MMSLF~\citep{zhangimproving}
& 86.61 & 86.69 & -     & \cellcolor{third}0.797 & 0.734
& \cellcolor{third}86.62 & \cellcolor{second}86.71 & -
& 0.773 & 0.539 \\

\hline
\textbf{SemMSA}
& \cellcolor{best}\textbf{88.70} & \cellcolor{best}\textbf{88.56}
& \cellcolor{best}\textbf{50.97} & \cellcolor{best}\textbf{0.861}
& \cellcolor{best}\textbf{0.628}
& \cellcolor{best}\textbf{88.88} & \cellcolor{best}\textbf{87.70}
& \cellcolor{best}\textbf{55.90} & \cellcolor{best}\textbf{0.877}  % 
& \cellcolor{best}\textbf{0.513} \\
\bottomrule
\end{tabular}
}
\end{small}
\end{center}
\vspace{-0.4cm}
\end{table*}

\begin{table}[htbp]
\centering
\caption{Ablation study of the number of CSR refinement steps $O$ on the SIMS dataset.}
\label{tab:ablation_O_sims}
\resizebox{0.6\linewidth}{!}{
\begin{tabular}{c|cccccc}
\toprule
$O$ & Acc-2   & F1   & Acc-3   & Acc-5   & MAE $\downarrow$ & Corr   \\
\midrule
1 & 73.82 & 75.94 & 56.91 & 34.02 & 0.493 & 0.468 \\
2 & 74.58 & 76.62 & 57.73 & 34.71 & 0.485 & 0.486 \\
3 & 75.12 & 77.08 & 58.31 & 35.24 & 0.479 & 0.495 \\
4 & \textbf{75.46} & \textbf{77.50} & \textbf{58.84} & \textbf{35.68} & \textbf{0.474} & \textbf{0.501} \\
5 & 75.21 & 77.12 & 58.47 & 35.39 & 0.478 & 0.497 \\
6 & 74.89 & 76.84 & 58.02 & 35.05 & 0.482 & 0.489 \\
\bottomrule
\end{tabular}
}
\end{table}

\subsubsection{Effect of Semantic Refinement Step}
We further investigate the effect of the number of CSR refinement steps $O$ on the SIMS dataset, as shown in Table~\ref{tab:ablation_O_sims}. 
The refinement step $O$ controls how many continuous latent semantic states are iteratively generated by the frozen LLM. 
When $O=1$, the model only performs a single-step semantic refinement, which provides limited high-level sentiment information and leads to relatively inferior performance. 
As $O$ increases from 1 to 4, the performance consistently improves across all metrics. 
This indicates that multiple refinement steps help CSR progressively enrich incomplete multimodal representations with more discriminative sentiment-aware semantics.

The best performance is achieved when $O=4$, with 75.46 Acc-2, 77.50 F1, 58.84 Acc-3, 35.68 Acc-5, 0.474 MAE, and 0.501 Corr. 
This demonstrates that a moderate number of latent semantic refinement steps can effectively balance semantic enhancement and representation compactness. 
However, when $O$ is further increased to 5 or 6, the performance slightly decreases. 
A possible reason is that excessive refinement introduces redundant or noisy latent states, which may weaken the compactness of semantic representations and increase the difficulty of cross-modal alignment. 
Moreover, larger $O$ also increases the computational cost because each refinement step requires an additional forward pass through the frozen LLM. 
Therefore, we set $O=4$ as the default value in SemMSA, which provides the best trade-off between performance and efficiency.

\subsubsection{Robustness Evaluation  under Intra-modal Missingness}

Following previous work~\cite{zhang2024towards, zhu2025proxy, yuan2021transformer}, we evaluate the robustness of SemMSA under intra-modal random missingness by varying the missing rate $r$ from 0 to 0.9 with an interval of 0.1. 
We do not report the results at $r=1.0$, since this setting removes all information from each modality and thus provides little meaningful evidence for sentiment prediction. 
Tables~\ref{table11}, \ref{table12}, and \ref{table13} present the detailed comparison results on MOSI, MOSEI, and SIMS, respectively.

\begin{table*}[t]
\centering
\scriptsize
\setlength{\tabcolsep}{3pt}
\renewcommand{\arraystretch}{1.05}
\caption{Details of robust comparison on MOSI with different random missing rates.}
\label{table11}

\noindent
\begin{minipage}[t]{0.49\textwidth}
\centering
\resizebox{\linewidth}{!}{%
\begin{tabular}{lcccccc}
\toprule
\multicolumn{7}{c}{\textbf{Random Missing Rate $r=0$}} \\
\midrule
Method & Acc-2 & F1 & Acc-5 & Acc-7 & MAE $\downarrow$ & Corr   \\
\midrule
MISA     & 81.24/82.78 & 81.23/82.83 & 48.30 & 43.05 & 0.771 & 0.777 \\
Self-MM  & 83.24/85.22 & 83.26/85.19 & 52.38 & 42.81 & 0.720 & 0.790 \\
MMIM     & 81.97/83.43 & 81.94/83.43 & 49.85 & \textbf{45.92} & 0.744 & 0.778 \\
TETFN    & 81.10/82.62 & 81.09/82.67 & 51.31 & 44.07 & 0.719 & 0.794 \\
TFR-Net  & 81.68/83.64 & 81.61/83.57 & 47.91 & 40.82 & 0.805 & 0.760 \\
LNLN     & 81.24/84.25 & 81.79/84.61 & 49.76 & 44.56 & 0.751 & 0.778 \\
P-RMF    & 82.65/84.15 & 82.69/84.37 & 48.83 & 44.31 & 0.726 & 0.782 \\
TF-Mamba & 81.63/83.69 & 81.58/83.71 & 50.58 & 44.31 & 0.762 & 0.774 \\
SemMSA   & \textbf{86.18}/\textbf{87.80} & \textbf{85.99}/\textbf{87.71} & \textbf{53.16} & 45.85 & \textbf{0.657} & \textbf{0.846} \\
\midrule

\multicolumn{7}{c}{\textbf{Random Missing Rate $r=0.1$}} \\
\midrule
MISA     & 79.01/80.18 & 78.97/80.21 & 46.21 & 40.28 & 0.847 & 0.721 \\
Self-MM  & 80.03/81.40 & 80.03/81.19 & 49.03 & 40.33 & 0.812 & 0.728 \\
MMIM     & 78.13/79.98 & 77.99/79.83 & 46.65 & 42.61 & 0.825 & 0.718 \\
TETFN    & 78.91/80.59 & 78.79/80.55 & 46.84 & 40.67 & 0.805 & 0.731 \\
TFR-Net  & 77.99/79.27 & 77.61/78.70 & 45.82 & 38.63 & 0.872 & 0.705 \\
LNLN     & 78.43/81.20 & 79.04/81.62 & 47.91 & 42.37 & 0.820 & 0.724 \\
P-RMF    & 81.34/82.62 & 81.35/82.89 & 47.52 & 42.13 & 0.800 & 0.730 \\
TF-Mamba & 80.03/81.86 & 79.97/81.87 & 48.40 & 42.86 & 0.824 & 0.732 \\
SemMSA   & \textbf{82.96}/\textbf{84.89} & \textbf{82.77}/\textbf{84.74} & \textbf{49.51} & \textbf{43.07} & \textbf{0.761} & \textbf{0.767} \\
\midrule

\multicolumn{7}{c}{\textbf{Random Missing Rate $r=0.2$}} \\
\midrule
MISA     & 76.34/77.54 & 76.30/77.58 & 41.55 & 36.25 & 0.939 & 0.654 \\
Self-MM  & 76.48/78.15 & 76.51/77.76 & 43.98 & 36.64 & 0.901 & 0.660 \\
MMIM     & 74.54/76.42 & 74.22/76.12 & 42.66 & 39.07 & 0.918 & 0.651 \\
TETFN    & 75.60/77.49 & 75.35/77.35 & 41.79 & 35.81 & 0.910 & 0.657 \\
TFR-Net  & 73.52/74.70 & 72.70/73.57 & 40.13 & 34.70 & 0.987 & 0.622 \\
LNLN     & 76.87/79.22 & 77.34/79.53 & 45.14 & 39.74 & 0.891 & 0.668 \\
P-RMF    & 78.13/79.57 & 78.11/80.97 & 44.75 & 40.38 & 0.853 & 0.668 \\
TF-Mamba & 79.15/80.49 & 79.17/80.56 & 44.75 & 39.21 & 0.879 & 0.693 \\
SemMSA   & \textbf{80.05}/\textbf{81.55} & \textbf{79.88}/\textbf{81.41} & \textbf{47.18} & \textbf{41.77} & \textbf{0.838} & \textbf{0.711} \\
\midrule

\multicolumn{7}{c}{\textbf{Random Missing Rate $r=0.3$}} \\
\midrule
MISA     & 74.54/75.76 & 74.51/75.82 & 38.97 & 34.60 & 0.989 & 0.618 \\
Self-MM  & 74.98/76.37 & 74.94/75.68 & 40.67 & 34.89 & 0.967 & 0.614 \\
MMIM     & 71.91/74.08 & 71.28/73.47 & 40.43 & 36.83 & 0.974 & 0.612 \\
TETFN    & 73.42/75.25 & 72.78/74.77 & 38.58 & 33.24 & 0.982 & 0.607 \\
TFR-Net  & 71.28/72.36 & 69.58/70.12 & 38.34 & 32.55 & 1.065 & 0.572 \\
LNLN     & 75.96/77.29 & 75.68/77.56 & 42.81 & 38.00 & 0.953 & 0.617 \\
P-RMF    & 75.80/76.83 & 75.82/\textbf{79.27} & 42.71 & 39.21 & 0.922 & 0.621 \\
TF-Mamba & 76.53/77.74 & 76.56/77.85 & 42.27 & 37.76 & 0.932 & 0.645 \\
SemMSA   & \textbf{79.65}/\textbf{78.79} & \textbf{79.43}/78.69 & \textbf{46.75} & \textbf{41.91} & \textbf{0.889} & \textbf{0.658} \\
\midrule

\multicolumn{7}{c}{\textbf{Random Missing Rate $r=0.4$}} \\
\midrule
MISA     & 72.59/73.88 & 72.49/73.88 & 35.37 & 32.65 & 1.041 & 0.585 \\
Self-MM  & 71.96/73.17 & 71.75/71.74 & 36.30 & 31.20 & 1.027 & 0.579 \\
MMIM     & 68.90/70.84 & 67.80/69.69 & 35.76 & 33.38 & 1.034 & 0.576 \\
TETFN    & 70.07/72.05 & 68.58/70.79 & 35.28 & 30.66 & 1.051 & 0.571 \\
TFR-Net  & 67.74/68.75 & 64.41/64.71 & 35.76 & 30.17 & 1.142 & 0.537 \\
LNLN     & 74.25/76.01 & 74.67/76.31 & 41.11 & 36.49 & 0.987 & 0.594 \\
P-RMF    & 73.76/75.46 & 74.09/77.71 & 40.67 & 35.59 & 1.001 & 0.584 \\
TF-Mamba & 75.22/76.07 & 75.25/76.16 & 40.23 & 35.86 & 0.961 & 0.617 \\
SemMSA   & \textbf{77.86}/\textbf{78.02} & \textbf{77.67}/\textbf{77.89} & \textbf{44.69} & \textbf{39.71} & \textbf{0.937} & \textbf{0.625} \\
\bottomrule
\end{tabular}%
}
\end{minipage}%
\hfill%
\begin{minipage}[t]{0.49\textwidth}
\centering
\resizebox{\linewidth}{!}{%
\begin{tabular}{lcccccc}
\toprule
\multicolumn{7}{c}{\textbf{Random Missing Rate $r=0.5$}} \\
\midrule
Method & Acc-2 & F1 & Acc-5 & Acc-7 & MAE $\downarrow$ & Corr   \\
\midrule
MISA     & 69.34/70.53 & 69.20/70.50 & 30.61 & 28.14 & 1.124 & 0.519 \\
Self-MM  & 67.54/67.43 & 66.81/64.27 & 31.39 & 26.97 & 1.129 & 0.503 \\
MMIM     & 66.52/68.09 & 64.59/66.15 & 29.89 & 28.23 & 1.128 & 0.501 \\
TETFN    & 65.06/67.23 & 61.78/64.30 & 31.34 & 27.55 & 1.157 & 0.492 \\
TFR-Net  & 63.02/64.83 & 56.64/58.04 & 30.71 & 25.85 & 1.270 & 0.443 \\
LNLN     & 71.86/73.37 & 72.30/73.70 & 38.39 & 33.92 & 1.059 & 0.536 \\
P-RMF    & 71.28/73.02 & 71.66/73.33 & 37.90 & 33.67 & 1.077 & 0.523 \\
TF-Mamba & 74.20/75.00 & 74.27/75.15 & 37.46 & 33.67 & 1.044 & 0.557 \\
SemMSA   & \textbf{74.79}/\textbf{75.58} & \textbf{74.62}/\textbf{75.48} & \textbf{40.61} & \textbf{36.07} & \textbf{1.025} & \textbf{0.558} \\
\midrule

\multicolumn{7}{c}{\textbf{Random Missing Rate $r=0.6$}} \\
\midrule
MISA     & 65.84/66.97 & 65.69/66.94 & 27.13 & 24.68 & 1.200 & 0.441 \\
Self-MM  & 63.36/63.47 & 62.07/58.94 & 27.31 & 24.34 & 1.209 & 0.425 \\
MMIM     & 62.49/63.67 & 59.48/60.87 & 27.11 & 25.41 & 1.208 & 0.418 \\
TETFN    & 61.23/63.42 & 56.08/58.68 & 27.99 & 25.12 & 1.238 & 0.417 \\
TFR-Net  & 59.47/61.64 & 50.53/52.44 & 28.33 & 24.05 & 1.371 & 0.363 \\
LNLN     & 67.69/69.00 & 67.99/\textbf{69.19} & 34.35 & 30.37 & 1.147 & 0.458 \\
P-RMF    & 67.35/68.75 & 67.64/68.64 & 33.24 & 29.30 & 1.147 & 0.432 \\
TF-Mamba & 68.37/68.60 & 68.45/68.79 & 33.53 & 30.76 & \textbf{1.127} & \textbf{0.487} \\
SemMSA   & \textbf{70.71}/\textbf{69.03} & \textbf{70.58}/\textbf{69.19} & \textbf{37.69} & \textbf{34.18} & 1.139 & 0.447 \\
\midrule

\multicolumn{7}{c}{\textbf{Random Missing Rate $r=0.7$}} \\
\midrule
MISA     & 63.89/65.09 & 63.74/65.07 & 23.27 & 21.14 & 1.257 & 0.381 \\
Self-MM  & 61.46/61.74 & 58.97/55.11 & 23.81 & 20.70 & 1.271 & 0.339 \\
MMIM     & 59.18/61.23 & 54.36/57.15 & 24.00 & 22.35 & 1.267 & 0.342 \\
TETFN    & 58.65/61.13 & 50.77/53.79 & 25.27 & 23.13 & 1.293 & 0.337 \\
TFR-Net  & 57.34/59.91 & 45.48/48.41 & 26.92 & 23.71 & 1.454 & 0.276 \\
LNLN     & 65.01/65.95 & 65.14/65.95 & 31.19 & 27.79 & 1.219 & 0.383 \\
P-RMF    & 65.16/66.16 & 65.33/64.69 & 32.94 & 27.84 & 1.229 & 0.383 \\
TF-Mamba & 66.91/\textbf{67.23} & 66.98/\textbf{67.41} & 29.30 & 27.26 & \textbf{1.196} & \textbf{0.411} \\
SemMSA   & \textbf{67.51}/64.76 & \textbf{67.31}/64.99 & \textbf{33.48} & \textbf{30.68} & 1.232 & 0.358 \\
\midrule

\multicolumn{7}{c}{\textbf{Random Missing Rate $r=0.8$}} \\
\midrule
MISA     & 62.24/63.56 & 61.67/63.16 & 20.99 & 19.92 & 1.311 & 0.321 \\
Self-MM  & 58.26/59.55 & 53.56/49.98 & 22.11 & 19.29 & 1.313 & 0.282 \\
MMIM     & 55.30/58.53 & 47.89/52.46 & 21.77 & 20.26 & 1.312 & 0.287 \\
TETFN    & 56.85/59.40 & 45.59/48.73 & 23.76 & 22.01 & 1.337 & 0.274 \\
TFR-Net  & 55.98/58.49 & 41.88/44.70 & 27.70 & 23.23 & 1.497 & 0.155 \\
LNLN     & 62.10/62.75 & 62.03/62.56 & 28.23 & 26.34 & 1.283 & 0.314 \\
P-RMF    & 61.08/62.04 & 61.22/60.76 & 29.74 & 25.97 & 1.275 & 0.316 \\
TF-Mamba & 63.12/\textbf{63.57} & 63.20/\textbf{63.77} & 26.38 & 24.93 & \textbf{1.258} & \textbf{0.353} \\
SemMSA   & \textbf{64.98}/62.78 & \textbf{64.81}/62.96 & \textbf{30.69} & \textbf{28.05} & 1.278 & 0.316 \\
\midrule

\multicolumn{7}{c}{\textbf{Random Missing Rate $r=0.9$}} \\
\midrule
MISA     & 58.21/58.64 & 56.19/56.84 & 18.41 & 17.78 & 1.369 & \textbf{0.226} \\
Self-MM  & 55.25/58.59 & 47.46/46.16 & 19.78 & 18.32 & 1.353 & 0.197 \\
MMIM     & 51.65/55.29 & 40.89/47.33 & 19.53 & 18.95 & 1.357 & 0.186 \\
TETFN    & 55.88/58.43 & 42.12/45.24 & 21.19 & 20.75 & 1.378 & 0.186 \\
TFR-Net  & 55.44/57.93 & 40.18/43.01 & 25.12 & 21.67 & 1.534 & 0.155 \\
LNLN     & 56.51/56.50 & 56.47/56.32 & 23.86 & 22.98 & 1.349 & 0.202 \\
P-RMF    & 58.75/59.45 & 59.01/56.66 & \textbf{26.68} & 23.49 & \textbf{1.346} & 0.212 \\
TF-Mamba & \textbf{60.20}/\textbf{60.37} & \textbf{60.30}/\textbf{60.59} & 24.49 & 22.89 & 1.363 & 0.215 \\
SemMSA   & 58.97/55.95 & 58.79/55.18 & 25.59 & \textbf{23.67} & 1.357 & 0.217 \\
\bottomrule
\end{tabular}%
}
\end{minipage}
\end{table*}

On the MOSI dataset, SemMSA achieves the best overall performance under low and moderate missing rates, as shown in Table~\ref{table11}.
When $r=0$, SemMSA obtains 86.18/87.80 Acc-2, 85.99/87.71 F1, 53.16 Acc-5, 45.85 Acc-7, 0.657 MAE, and 0.846 Corr, outperforming all baselines by a clear margin. 
As the missing rate increases from 0.1 to 0.5, SemMSA remains highly stable and achieves the best results on most metrics. 
For example, at $r=0.5$, SemMSA obtains 74.79/75.58 Acc-2, 74.62/75.48 F1, 40.61 Acc-5, 36.07 Acc-7, 1.025 MAE, and 0.558 Corr, outperforming strong baselines such as P-RMF and TF-Mamba on the majority of metrics. 
These results indicate that the LLM-derived latent semantics introduced by CSR can effectively compensate for missing modality information, while CSA further improves cross-modal consistency under incomplete observations. 
At extremely high missing rates, such as $r=0.8$ and $r=0.9$, all methods suffer from severe degradation. 
Although SemMSA is not always the best on every metric in these extreme cases, it still achieves competitive results, showing that the proposed semantic-aided framework remains robust even when most modality evidence is missing.

\begin{table*}[t]
\centering
\scriptsize
\setlength{\tabcolsep}{3pt}
\renewcommand{\arraystretch}{1.05}
\caption{Details of robust comparison on MOSEI with different random missing rates.}
\label{table12}

\noindent
\begin{minipage}[t]{0.49\textwidth}
\centering
\resizebox{\linewidth}{!}{%
\begin{tabular}{lcccccc}
\toprule
\multicolumn{7}{c}{\textbf{Random Missing Rate $r=0$}} \\
\midrule
Method & Acc-2 & F1 & Acc-5 & Acc-7 & MAE $\downarrow$ & Corr   \\
\midrule
MISA     & 84.10/85.28 & 83.75/85.10 & 53.85 & 51.79 & 0.552 & 0.759 \\
Self-MM  & \textbf{84.68}/85.34 & \textbf{84.66}/85.11 & 55.72 & \textbf{53.89} & 0.531 & 0.764 \\
MMIM     & 81.65/83.53 & 81.41/83.39 & 53.04 & 50.76 & 0.576 & 0.724 \\
TFR-Net  & 84.65/84.96 & 84.34/84.71 & 47.91 & 53.71 & 0.550 & 0.745 \\
LNLN     & 83.61/84.14 & 84.02/84.53 & 51.94 & 50.66 & 0.572 & 0.735 \\
P-RMF    & 83.62/85.20 & 83.68/85.48 & 52.09 & 49.77 & 0.539 & 0.767 \\
TF-Mamba & 82.89/83.82 & 82.92/83.71 & 53.83 & 52.26 & 0.556 & 0.748 \\
SemMSA   & 84.48/\textbf{86.41} & 84.53/\textbf{86.32} & 54.74 & 52.95 & \textbf{0.516} & \textbf{0.799} \\
\midrule

\multicolumn{7}{c}{\textbf{Random Missing Rate $r=0.1$}} \\
\midrule
MISA     & 82.28/82.21 & 80.79/81.28 & 51.34 & 50.13 & 0.598 & 0.722 \\
Self-MM  & \textbf{83.79}/83.03 & 83.23/82.43 & 53.18 & 51.80 & 0.564 & 0.725 \\
MMIM     & 81.09/82.00 & 80.15/81.57 & 51.19 & 49.09 & 0.602 & 0.696 \\
TFR-Net  & 83.31/82.92 & 82.40/82.25 & 45.82 & \textbf{52.29} & 0.573 & 0.715 \\
LNLN     & 82.73/83.32 & 82.91/83.66 & 51.25 & 49.96 & 0.591 & 0.712 \\
P-RMF    & 82.79/83.98 & 82.94/84.37 & 51.45 & 49.04 & 0.556 & 0.748 \\
TF-Mamba & 82.68/83.16 & 82.69/83.03 & 52.07 & 50.53 & 0.570 & 0.730 \\
SemMSA   & 83.74/\textbf{85.30} & \textbf{83.74}/\textbf{85.17} & 53.41 & 51.76 & \textbf{0.536} & \textbf{0.774} \\
\midrule

\multicolumn{7}{c}{\textbf{Random Missing Rate $r=0.2$}} \\
\midrule
MISA     & 79.93/77.84 & 76.88/75.56 & 47.66 & 47.24 & 0.659 & 0.674 \\
Self-MM  & 82.33/80.84 & 81.17/79.76 & 50.51 & 49.44 & 0.604 & 0.678 \\
MMIM     & 79.66/79.93 & 77.68/79.08 & 47.99 & 46.27 & 0.642 & 0.653 \\
TFR-Net  & 81.61/80.47 & 79.99/79.29 & 40.13 & \textbf{51.04} & 0.604 & 0.672 \\
LNLN     & 81.68/81.70 & 81.89/81.95 & 49.95 & 48.75 & 0.616 & 0.677 \\
P-RMF    & 82.25/82.97 & 82.58/83.37 & 49.35 & 47.91 & 0.576 & 0.722 \\
TF-Mamba & 81.84/82.55 & 81.83/82.40 & 50.50 & 49.17 & 0.588 & 0.710 \\
SemMSA   & \textbf{83.11}/\textbf{83.97} & \textbf{83.05}/\textbf{83.85} & 52.32 & 50.84 & \textbf{0.568} & \textbf{0.738} \\
\midrule

\multicolumn{7}{c}{\textbf{Random Missing Rate $r=0.3$}} \\
\midrule
MISA     & 77.28/73.32 & 72.25/68.91 & 43.40 & 43.99 & 0.724 & 0.615 \\
Self-MM  & 79.99/77.63 & 77.74/75.69 & 48.07 & 47.23 & 0.653 & 0.610 \\
MMIM     & 77.79/77.08 & 74.49/75.46 & 44.73 & 43.25 & 0.690 & 0.597 \\
TFR-Net  & 79.29/77.48 & 76.52/75.43 & 38.34 & 48.75 & 0.650 & 0.604 \\
LNLN     & 80.45/80.11 & 80.91/80.44 & 48.40 & 47.36 & 0.648 & 0.629 \\
P-RMF    & 80.88/81.26 & 81.40/81.86 & 47.78 & 45.95 & 0.611 & 0.683 \\
TF-Mamba & 81.22/80.82 & 81.10/80.58 & 49.00 & 47.89 & 0.613 & 0.675 \\
SemMSA   & \textbf{81.59}/\textbf{82.56} & \textbf{81.47}/\textbf{82.41} & \textbf{50.58} & \textbf{49.27} & \textbf{0.604} & \textbf{0.692} \\
\midrule

\multicolumn{7}{c}{\textbf{Random Missing Rate $r=0.4$}} \\
\midrule
MISA     & 75.04/70.46 & 67.93/64.02 & 39.53 & 40.87 & 0.780 & 0.561 \\
Self-MM  & 78.09/75.02 & 74.48/72.01 & 45.04 & 44.40 & 0.694 & 0.554 \\
MMIM     & 76.15/74.56 & 71.40/71.98 & 41.86 & 40.84 & 0.732 & 0.542 \\
TFR-Net  & 77.65/74.74 & 73.71/71.67 & 35.76 & 46.70 & 0.688 & 0.548 \\
LNLN     & 79.70/78.49 & 80.46/78.98 & 46.88 & 45.99 & 0.673 & 0.592 \\
P-RMF    & 79.76/79.97 & 80.58/80.74 & 46.73 & 45.59 & 0.631 & 0.653 \\
TF-Mamba & 80.02/80.02 & 79.80/79.80 & 47.80 & 46.73 & 0.639 & 0.638 \\
SemMSA   & \textbf{81.07}/\textbf{81.58} & \textbf{80.86}/\textbf{81.38} & \textbf{49.07} & \textbf{47.94} & \textbf{0.614} & \textbf{0.664} \\
\bottomrule
\end{tabular}%
}
\end{minipage}%
\hfill%
\begin{minipage}[t]{0.49\textwidth}
\centering
\resizebox{\linewidth}{!}{%
\begin{tabular}{lcccccc}
\toprule
\multicolumn{7}{c}{\textbf{Random Missing Rate $r=0.5$}} \\
\midrule
Method & Acc-2 & F1 & Acc-5 & Acc-7 & MAE $\downarrow$ & Corr   \\
\midrule
MISA     & 73.21/67.38 & 64.14/58.38 & 36.05 & 38.12 & 0.834 & 0.492 \\
Self-MM  & 75.81/71.97 & 70.38/67.40 & 43.14 & 42.70 & 0.733 & 0.477 \\
MMIM     & 74.45/71.75 & 67.96/67.70 & 39.21 & 38.68 & 0.775 & 0.470 \\
TFR-Net  & 75.69/71.53 & 70.07/66.88 & 30.71 & 45.00 & 0.730 & 0.471 \\
LNLN     & 78.10/76.44 & 79.30/77.23 & 45.59 & 44.90 & 0.710 & 0.529 \\
P-RMF    & 78.64/78.62 & 79.74/79.49 & 44.90 & 43.94 & 0.666 & 0.601 \\
TF-Mamba & 77.94/78.59 & 77.78/78.34 & 46.60 & 45.68 & 0.676 & 0.583 \\
SemMSA   & \textbf{80.20}/\textbf{80.85} & \textbf{79.85}/\textbf{80.58} & \textbf{47.08} & \textbf{46.01} & \textbf{0.658} & \textbf{0.602} \\
\midrule

\multicolumn{7}{c}{\textbf{Random Missing Rate $r=0.6$}} \\
\midrule
MISA     & 72.30/65.55 & 62.12/54.64 & 33.30 & 36.16 & 0.875 & 0.415 \\
Self-MM  & 73.93/69.33 & 66.76/63.01 & 41.75 & 41.47 & 0.762 & 0.401 \\
MMIM     & 73.16/68.83 & 65.43/63.09 & 37.48 & 37.13 & 0.808 & 0.402 \\
TFR-Net  & 74.05/68.80 & 67.07/62.51 & 28.33 & 43.88 & 0.762 & 0.397 \\
LNLN     & 76.50/73.82 & 78.33/75.03 & 44.00 & 43.52 & 0.736 & 0.471 \\
P-RMF    & 77.44/76.11 & 78.97/77.58 & 43.12 & 42.26 & 0.703 & \textbf{0.545} \\
TF-Mamba & 75.77/75.89 & 75.59/75.63 & 44.73 & 43.96 & 0.709 & 0.534 \\
SemMSA   & \textbf{78.81}/\textbf{79.04} & \textbf{79.18}/\textbf{78.60} & \textbf{46.13} & \textbf{45.36} & \textbf{0.691} & 0.541 \\
\midrule

\multicolumn{7}{c}{\textbf{Random Missing Rate $r=0.7$}} \\
\midrule
MISA     & 71.71/64.28 & 60.65/51.82 & 31.21 & 34.54 & 0.906 & 0.344 \\
Self-MM  & 72.55/66.79 & 63.45/58.05 & 40.12 & 39.93 & 0.786 & 0.329 \\
MMIM     & 72.26/66.89 & 63.26/58.90 & 35.47 & 35.25 & 0.834 & 0.341 \\
TFR-Net  & 72.77/66.64 & 64.02/58.32 & 26.92 & 42.91 & 0.786 & 0.322 \\
LNLN     & 74.74/71.55 & 77.40/73.49 & 42.56 & 42.22 & 0.762 & 0.408 \\
P-RMF    & 75.87/74.64 & 78.12/75.88 & 42.41 & 41.73 & \textbf{0.733} & \textbf{0.481} \\
TF-Mamba & 73.49/73.50 & 73.19/73.26 & 43.40 & 42.78 & 0.747 & 0.469 \\
SemMSA   & \textbf{76.30}/\textbf{75.54} & \textbf{78.40}/\textbf{76.84} & \textbf{43.66} & \textbf{43.07} & 0.745 & 0.469 \\
\midrule

\multicolumn{7}{c}{\textbf{Random Missing Rate $r=0.8$}} \\
\midrule
MISA     & 71.30/63.43 & 59.69/49.95 & 29.51 & 33.29 & 0.927 & 0.267 \\
Self-MM  & 71.83/65.07 & 61.49/54.44 & 38.78 & 38.69 & 0.805 & 0.259 \\
MMIM     & 71.57/64.97 & 61.45/54.76 & 33.71 & 33.64 & 0.858 & 0.269 \\
TFR-Net  & 71.95/65.05 & 61.82/54.91 & 27.70 & 42.23 & 0.807 & 0.241 \\
LNLN     & 72.86/68.62 & 76.80/71.83 & 40.97 & 40.76 & 0.791 & 0.325 \\
P-RMF    & \textbf{74.46}/\textbf{70.75} & \textbf{77.91}/\textbf{73.03} & 41.00 & 40.46 & 0.764 & 0.401 \\
TF-Mamba & 71.60/70.34 & 71.27/70.16 & 40.93 & 40.37 & 0.786 & 0.408 \\
SemMSA   & 74.13/70.41 & 77.65/\textbf{73.03} & \textbf{43.38} & \textbf{42.90} & \textbf{0.753} & \textbf{0.419} \\
\midrule

\multicolumn{7}{c}{\textbf{Random Missing Rate $r=0.9$}} \\
\midrule
MISA     & 71.07/62.95 & 59.12/48.80 & 28.03 & 32.29 & 0.941 & 0.180 \\
Self-MM  & 71.24/63.85 & 59.72/51.32 & 37.50 & 37.46 & 0.821 & 0.188 \\
MMIM     & 71.10/63.69 & 59.99/51.26 & 32.67 & 32.61 & 0.877 & 0.197 \\
TFR-Net  & 71.34/63.64 & 59.99/52.02 & 25.12 & \textbf{41.73} & 0.820 & 0.175 \\
LNLN     & 71.51/64.83 & 77.52/70.60 & 40.19 & 40.10 & 0.820 & 0.221 \\
P-RMF    & 72.59/67.86 & \textbf{77.95}/\textbf{71.51} & 39.90 & 39.62 & 0.805 & 0.289 \\
TF-Mamba & 68.68/64.75 & 68.18/64.87 & 37.56 & 37.24 & 0.851 & 0.291 \\
SemMSA   & \textbf{72.73}/\textbf{68.18} & 77.53/70.57 & 40.85 & 40.52 & \textbf{0.801} & \textbf{0.316} \\
\bottomrule
\end{tabular}%
}
\end{minipage}

\vspace{-2mm}
\end{table*}

As shown in Table~\ref{table12}, SemMSA shows more consistent advantages across different missing rates on MOSEI dataset. 
Under the complete setting $r=0$, SemMSA achieves the best Corr of 0.799 and the lowest MAE of 0.516, while maintaining competitive Acc-2 and F1 results. 
When the missing rate increases, SemMSA consistently outperforms most baselines. 
For instance, at $r=0.4$, SemMSA obtains 81.07/81.58 Acc-2, 80.86/81.38 F1, 49.07 Acc-5, 47.94 Acc-7, 0.614 MAE, and 0.664 Corr, achieving the best performance on all metrics. 
At $r=0.6$, SemMSA also achieves the best Acc-2, F1, Acc-5, Acc-7, and MAE, demonstrating strong robustness under moderate-to-severe missingness. 
Even at $r=0.9$, where the available multimodal information is extremely limited, SemMSA obtains the best Acc-2, Acc-5, MAE, and Corr, verifying its ability to preserve reliable sentiment representations under highly incomplete conditions.

\begin{table*}[t]
\centering
\scriptsize
\setlength{\tabcolsep}{3pt}
\renewcommand{\arraystretch}{1.05}
\caption{Details of robust comparison on SIMS with different random missing rates.}
\label{table13}

\noindent
\begin{minipage}[t]{0.49\textwidth}
\centering
\resizebox{\linewidth}{!}{%
\begin{tabular}{lcccccc}
\toprule
\multicolumn{7}{c}{\textbf{Random Missing Rate $r=0$}} \\
\midrule
Method & Acc-2 & F1 & Acc-3 & Acc-5 & MAE $\downarrow$ & Corr   \\
\midrule
MISA     & 78.19 & 77.22 & 63.38 & 40.55 & 0.449 & 0.576 \\
Self-MM  & 78.26 & 78.00 & 64.92 & 40.77 & 0.421 & 0.584 \\
MMIM     & 75.42 & 73.10 & 60.69 & 37.42 & 0.475 & 0.528 \\
TETFN    & 80.23 & 79.25 & \textbf{65.86} & 41.94 & 0.424 & 0.589 \\
TFR-Net  & 69.15 & 58.44 & 54.12 & 33.85 & 0.562 & 0.254 \\
P-RMF    & 78.34 & 79.69 & 60.61 & 38.95 & 0.441 & 0.550 \\
TF-Mamba & 79.65 & 78.92 & 61.93 & 37.86 & 0.441 & 0.548 \\
SemMSA   & \textbf{80.62} & \textbf{80.61} & 65.15 & \textbf{42.83} & \textbf{0.408} & \textbf{0.640} \\
\midrule

\multicolumn{7}{c}{\textbf{Random Missing Rate $r=0.1$}} \\
\midrule
MISA     & 77.39 & 75.82 & 63.02 & 38.88 & 0.461 & 0.561 \\
Self-MM  & 77.32 & 76.76 & 63.53 & 40.26 & 0.433 & 0.563 \\
MMIM     & 74.25 & 72.08 & 60.90 & 37.27 & 0.473 & 0.529 \\
TETFN    & 78.92 & 77.70 & \textbf{64.62} & 41.36 & 0.432 & 0.578 \\
TFR-Net  & 68.85 & 59.38 & 53.25 & 30.12 & 0.596 & 0.203 \\
P-RMF    & 77.24 & 78.88 & 59.52 & 37.72 & 0.454 & 0.530 \\
TF-Mamba & \textbf{79.65} & 78.79 & 62.36 & 36.98 & 0.445 & 0.550 \\
SemMSA   & 79.54 & \textbf{79.83} & 63.39 & \textbf{43.05} & \textbf{0.425} & \textbf{0.625} \\
\midrule

\multicolumn{7}{c}{\textbf{Random Missing Rate $r=0.2$}} \\
\midrule
MISA     & 74.33 & 71.70 & 59.23 & 38.15 & 0.489 & 0.490 \\
Self-MM  & 74.98 & 73.71 & 61.71 & 38.37 & 0.464 & 0.500 \\
MMIM     & 72.36 & 69.80 & 57.33 & 37.27 & 0.504 & 0.460 \\
TETFN    & 75.49 & 73.59 & 61.56 & 39.46 & 0.457 & 0.527 \\
TFR-Net  & 68.64 & 59.74 & 53.61 & 29.03 & 0.619 & 0.191 \\
P-RMF    & 76.23 & 77.46 & 59.30 & 37.05 & 0.460 & 0.512 \\
TF-Mamba & \textbf{78.77} & 77.88 & 61.05 & 38.29 & 0.459 & 0.507 \\
SemMSA   & 77.78 & \textbf{78.36} & \textbf{62.52} & \textbf{42.16} & \textbf{0.441} & \textbf{0.590} \\
\midrule

\multicolumn{7}{c}{\textbf{Random Missing Rate $r=0.3$}} \\
\midrule
MISA     & 74.11 & 70.40 & 59.30 & 36.40 & 0.505 & 0.464 \\
Self-MM  & 74.76 & 72.85 & 59.81 & 37.93 & 0.474 & 0.487 \\
MMIM     & 72.36 & 69.52 & 58.06 & 37.71 & 0.512 & 0.436 \\
TETFN    & 75.86 & 73.28 & \textbf{61.92} & 38.80 & 0.463 & 0.521 \\
TFR-Net  & 68.42 & 59.88 & 52.30 & 27.64 & 0.640 & 0.182 \\
P-RMF    & 75.66 & 75.89 & 56.89 & 36.89 & 0.475 & 0.483 \\
TF-Mamba & 75.93 & 74.72 & 58.42 & 39.82 & 0.468 & 0.485 \\
SemMSA   & \textbf{78.00} & \textbf{78.77} & 61.21 & \textbf{41.29} & \textbf{0.456} & \textbf{0.555} \\
\midrule

\multicolumn{7}{c}{\textbf{Random Missing Rate $r=0.4$}} \\
\midrule
MISA     & 72.87 & 67.52 & 57.33 & 34.86 & 0.523 & 0.436 \\
Self-MM  & 73.30 & 70.36 & 58.28 & 34.57 & 0.482 & 0.479 \\
MMIM     & 69.95 & 66.49 & 55.36 & 34.57 & 0.533 & 0.399 \\
TETFN    & 73.81 & 70.66 & 58.93 & 35.81 & 0.473 & 0.504 \\
TFR-Net  & 67.91 & 59.16 & 51.86 & 25.31 & 0.664 & 0.176 \\
P-RMF    & 75.32 & 76.30 & 55.80 & 36.54 & 0.482 & 0.472 \\
TF-Mamba & 75.49 & 73.72 & 60.39 & \textbf{40.04} & \textbf{0.470} & 0.477 \\
SemMSA   & \textbf{75.81} & \textbf{77.53} & \textbf{60.98} & 38.44 & 0.473 & \textbf{0.527} \\
\bottomrule
\end{tabular}%
}
\end{minipage}%
\hfill%
\begin{minipage}[t]{0.49\textwidth}
\centering
\resizebox{\linewidth}{!}{%
\begin{tabular}{lcccccc}
\toprule
\multicolumn{7}{c}{\textbf{Random Missing Rate $r=0.5$}} \\
\midrule
Method & Acc-2 & F1 & Acc-3 & Acc-5 & MAE $\downarrow$ & Corr   \\
\midrule
MISA     & 71.26 & 64.16 & 54.78 & 30.56 & 0.552 & 0.367 \\
Self-MM  & 71.41 & 67.11 & 53.90 & 32.02 & 0.517 & 0.390 \\
MMIM     & 68.49 & 64.81 & 52.37 & 33.41 & 0.553 & 0.336 \\
TETFN    & 72.43 & 67.30 & 56.24 & 33.48 & 0.512 & 0.394 \\
TFR-Net  & 67.47 & 58.66 & 52.37 & 24.65 & 0.685 & 0.171 \\
P-RMF    & 73.61 & 74.83 & 54.83 & 34.79 & 0.495 & 0.404 \\
TF-Mamba & \textbf{75.71} & 73.61 & \textbf{58.64} & 37.20 & 0.495 & 0.424 \\
SemMSA   & 73.84 & \textbf{76.27} & 56.60 & \textbf{38.21} & \textbf{0.486} & \textbf{0.480} \\
\midrule

\multicolumn{7}{c}{\textbf{Random Missing Rate $r=0.6$}} \\
\midrule
MISA     & 70.46 & 61.81 & 53.97 & 27.72 & 0.578 & 0.286 \\
Self-MM  & 70.02 & 64.21 & 51.86 & 29.10 & 0.548 & 0.313 \\
MMIM     & 67.91 & 63.86 & 49.31 & 29.18 & 0.578 & 0.270 \\
TETFN    & 70.97 & 64.19 & 53.54 & 29.90 & 0.545 & 0.309 \\
TFR-Net  & 67.03 & 58.30 & 52.59 & 24.80 & 0.696 & 0.157 \\
P-RMF    & 72.12 & 73.32 & 53.05 & \textbf{33.92} & 0.515 & 0.383 \\
TF-Mamba & \textbf{73.09} & 69.44 & 54.92 & 32.82 & 0.523 & 0.370 \\
SemMSA   & 72.96 & \textbf{75.94} & \textbf{55.74} & 33.64 & \textbf{0.495} & \textbf{0.455} \\
\midrule

\multicolumn{7}{c}{\textbf{Random Missing Rate $r=0.7$}} \\
\midrule
MISA     & 69.95 & 59.54 & 52.52 & 24.87 & 0.601 & 0.167 \\
Self-MM  & 69.58 & 62.28 & 50.62 & 25.53 & 0.571 & 0.198 \\
MMIM     & 66.89 & 62.23 & 46.53 & 28.59 & 0.595 & 0.190 \\
TETFN    & 69.29 & 61.09 & 51.06 & 27.86 & 0.572 & 0.190 \\
TFR-Net  & 67.18 & 58.15 & 52.30 & 23.78 & 0.707 & 0.163 \\
P-RMF    & 71.58 & 72.22 & 52.52 & \textbf{32.23} & 0.531 & 0.369 \\
TF-Mamba & 71.77 & 67.46 & 48.80 & 28.45 & 0.570 & 0.248 \\
SemMSA   & \textbf{72.09} & \textbf{75.78} & \textbf{55.52} & 31.43 & \textbf{0.510} & \textbf{0.425} \\
\midrule

\multicolumn{7}{c}{\textbf{Random Missing Rate $r=0.8$}} \\
\midrule
MISA     & 69.37 & 57.82 & 52.22 & 22.69 & 0.610 & 0.092 \\
Self-MM  & 69.51 & 60.68 & 50.77 & 22.03 & 0.585 & 0.138 \\
MMIM     & 65.28 & 60.53 & 45.35 & 22.32 & 0.607 & 0.145 \\
TETFN    & 69.88 & 60.92 & 49.60 & 23.48 & 0.584 & 0.154 \\
TFR-Net  & 67.54 & 57.55 & 52.74 & 22.97 & 0.721 & 0.100 \\
P-RMF    & 69.51 & 70.35 & 49.89 & \textbf{31.07} & 0.568 & 0.269 \\
TF-Mamba & 71.55 & 66.58 & 46.39 & 26.48 & 0.607 & 0.139 \\
SemMSA   & \textbf{72.75} & \textbf{77.12} & \textbf{54.21} & 24.65 & \textbf{0.520} & \textbf{0.389} \\
\midrule

\multicolumn{7}{c}{\textbf{Random Missing Rate $r=0.9$}} \\
\midrule
MISA     & 69.22 & 57.01 & 52.95 & 20.64 & 0.617 & 0.041 \\
Self-MM  & 68.92 & 58.32 & 52.15 & 22.17 & 0.586 & 0.111 \\
MMIM     & 65.72 & 59.64 & 42.67 & 20.35 & 0.610 & 0.096 \\
TETFN    & 68.92 & 58.75 & 45.73 & 22.10 & 0.590 & 0.108 \\
TFR-Net  & 69.08 & 57.71 & \textbf{53.76} & 23.05 & 0.721 & 0.088 \\
P-RMF    & 66.77 & 67.54 & 45.08 & \textbf{29.10} & 0.581 & 0.168 \\
TF-Mamba & 65.21 & 60.87 & 42.23 & 26.70 & 0.648 & 0.114 \\
SemMSA   & \textbf{71.22} & \textbf{74.86} & 53.11 & 21.14 & \textbf{0.528} & \textbf{0.327} \\
\bottomrule
\end{tabular}%
}
\end{minipage}

\vspace{-2mm}
\end{table*}

% \newpage
On the SIMS dataset, SemMSA also demonstrates strong robustness, especially in terms of F1, MAE, and Corr, as shown in Table~\ref{table13}. 
At $r=0$, SemMSA achieves the best Acc-2, F1, Acc-5, MAE, and Corr, with 80.62 Acc-2, 80.61 F1, 42.83 Acc-5, 0.408 MAE, and 0.640 Corr. 
When the missing rate increases to $r=0.3$, SemMSA still achieves 78.00 Acc-2, 78.77 F1, 41.29 Acc-5, 0.456 MAE, and 0.555 Corr, surpassing competing methods on most metrics. 
At higher missing rates, SemMSA remains particularly strong in F1 and Corr. 
For example, at $r=0.8$, SemMSA achieves the best Acc-2, F1, Acc-3, MAE, and Corr, indicating that the proposed method can better preserve sentiment-discriminative information in challenging Chinese multimodal sentiment scenarios. 
At $r=0.9$, SemMSA still obtains the highest Acc-2, F1, MAE, and Corr, further confirming its robustness under severe intra-modal missingness.

The superior robustness of SemMSA can be attributed to two key designs. 
First, Cross-modal Semantic Refinement enriches incomplete multimodal inputs with compact latent sentiment semantics from the frozen LLM, which helps alleviate the information loss caused by missing visual, acoustic, or textual tokens. 
Second, Cross-modal Spectral Alignment jointly aligns semantic, visual, acoustic, and language representations through a kernel Gram matrix, enabling anchor-free modeling of global nonlinear cross-modal dependencies. 
By combining semantic compensation with spectral alignment, SemMSA learns representations that are both sentiment-aware and cross-modally coherent, leading to stable and robust performance across different datasets and missing rates.

\begin{figure*}[t]
  \centering
  \includegraphics[width=\linewidth]{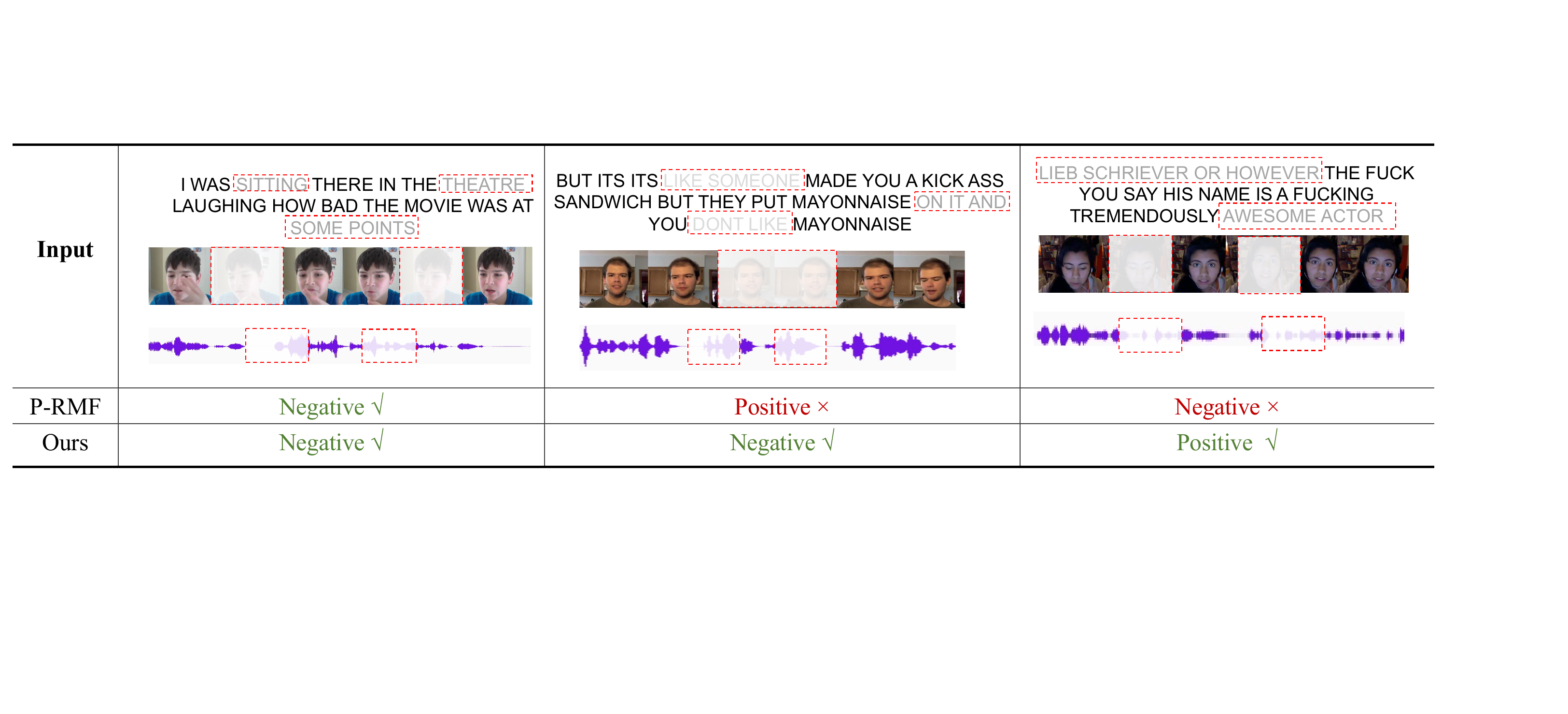} 
  \caption{Case study on the MOSI testing set, where SemMSA corrects challenging examples misclassified by P-RMF under incomplete multimodal inputs.}
  \label{fig5}
  \vspace{-0.15cm}
\end{figure*}

\subsubsection{Case Study}
To intuitively evaluate the robustness of SemMSA, we visualize three testing examples from the MOSI dataset in Figure~\ref{fig5}, where the red dashed boxes denote missing or corrupted language, visual, and acoustic regions. 
In the first case, both P-RMF~\cite{zhu2025proxy} and SemMSA correctly predict the negative sentiment, showing that the remaining multimodal cues are still sufficient for sentiment recognition. 
However, in the second case, P-RMF incorrectly predicts positive sentiment, possibly because it is misled by local expressions such as ``kick ass'' while failing to capture the overall negative meaning conveyed by ``you don't like mayonnaise'' under missing textual, facial, and acoustic cues. 
SemMSA correctly identifies the negative sentiment by leveraging latent sentiment semantics to compensate for incomplete evidence. 
These results suggest that Cross-modal Semantic Refinement helps recover sentiment-relevant semantics from incomplete inputs, while Cross-modal Spectral Alignment further enforces global consistency among language, visual, acoustic, and semantic representations, enabling accurate prediction under challenging intra-modal missingness.

\subsection{Limitations}
While SemMSA demonstrates strong performance across a variety of MSA benchmarks, several limitations remain. First, our experiments are conducted on widely used MSA benchmarks, including MOSI, MOSEI, and SIMS. 
These datasets mainly focus on sentiment analysis in video-based opinion scenarios. 
The generalization ability of SemMSA to broader multimodal affective understanding tasks, such as emotion recognition in conversation, multimodal sarcasm detection, or real-world long-form human interaction analysis, remains to be further explored. Second, SemMSA mainly exploits the hidden states of the frozen LLM as continuous semantic representations, but the semantic refinement process remains implicit. Since no explicit textual rationale is decoded, the generated latent semantic states are less interpretable than natural-language explanations. Finally, the current CSA module aligns semantic, visual, acoustic, and language representations through a kernel Gram matrix within each instance. 
Although the instance-level spectral separation loss helps preserve discriminability, the alignment quality may still be affected when most available modality cues are unreliable. 
In such cases, the refined semantics may not fully compensate for the missing evidence.

\subsection{Societal Impact}
SemMSA aims to improve robust multimodal sentiment analysis under incomplete data, which has potential benefits for real-world human-centered applications. 
By effectively leveraging language, visual, and acoustic cues even when some modalities are missing or corrupted, SemMSA can support more reliable affective understanding in scenarios such as online education, human-computer interaction, healthcare assistance, customer feedback analysis, and intelligent social platforms. 
Its semantic-aided design may help systems better interpret user attitudes and emotional states under noisy or imperfect sensing conditions, thereby improving accessibility and interaction quality. However, multimodal sentiment analysis also raises several societal concerns. 
First, sentiment prediction involves sensitive human-centered information, and the use of visual, acoustic, and language data may introduce privacy risks if deployed without appropriate consent, anonymization, and data protection mechanisms. 
Second, models trained on existing multimodal datasets may inherit demographic, cultural, linguistic, or contextual biases, potentially leading to unfair or inaccurate predictions for underrepresented groups. 
Third, incorrect sentiment predictions may negatively affect users if applied in high-stakes domains such as hiring and legal decision-making.

\end{document}